\def\eqref#1{equation~\ref{#1}}
\def\1{\bm{1}}
\DeclareMathAlphabet{\mathsfit}{\encodingdefault}{\sfdefault}{m}{sl}
\SetMathAlphabet{\mathsfit}{bold}{\encodingdefault}{\sfdefault}{bx}{n}
\newcommand{\E}{\mathbb{E}}
\newcommand{\KL}{D_{\mathrm{KL}}}
\DeclareMathOperator*{\argmax}{arg\,max}
\newtheorem{proposition}{Proposition}
\theoremstyle{remark}
\theoremstyle{definition}
\newcommand{\bhline}[1]{\noalign{\hrule height #1}}
\title{Deployment-Efficient Reinforcement Learning via Model-Based Offline Optimization}
\author{
  Tatsuya Matsushima\thanks{Equal contribution.} \qquad Hiroki Furuta\footnotemark[1]\qquad\quad\\
  The University of Tokyo\\
  \texttt{\{matsushima, furuta\}@weblab.t.u-tokyo.ac.jp} \\
  \AND
  Yutaka Matsuo \\
  The University of Tokyo\\
  \texttt{matsuo@weblab.t.u-tokyo.ac.jp} \\
  \And
  Ofir Nachum \\
  Google Research \\
  \texttt{ofirnachum@google.com}
  \And
  Shixiang Shane Gu \\
  Google Research \\
  \texttt{shanegu@google.com} \\
}
\begin{document}

\maketitle

\begin{abstract}
    Most reinforcement learning~(RL) algorithms assume online access to the environment, in which one may readily interleave updates to the policy with experience collection using that policy. However, in many real-world applications such as health, education, dialogue agents, and robotics, the cost or potential risk of deploying a new data-collection policy is high, to the point that it can become prohibitive to update the data-collection policy more than a few times during learning. With this view, we propose a novel concept of \emph{deployment efficiency}, measuring the number of distinct data-collection policies that are used during policy learning. We observe that na\"{i}vely applying existing model-free offline RL algorithms recursively does not lead to a practical deployment-efficient \textit{and} sample-efficient algorithm. We propose a novel model-based algorithm, Behavior-Regularized Model-ENsemble~(BREMEN) that can effectively optimize a policy offline using 10-20 times fewer data than prior works. Furthermore, the recursive application of BREMEN is able to achieve impressive deployment efficiency while maintaining the same or better sample efficiency, learning successful policies from scratch on simulated robotic environments with only 5-10 deployments, compared to typical values of hundreds to millions in standard RL baselines. Codes and pre-trained models are available at~\url{https://github.com/matsuolab/BREMEN}.
\end{abstract}

\section{Introduction}

Reinforcement learning~(RL) algorithms have recently demonstrated impressive success in learning behaviors for a variety of sequential decision-making tasks~\cite{d4pg,hessel2018rainbow,nachum2019multi}.
Virtually all of these demonstrations have relied on highly-frequent online access to the environment, with the RL algorithms often interleaving each update to the policy with additional experience collection of that policy acting in the environment.
However, in many real-world applications of RL, such as health~\cite{murphy2001marginal}, education~\cite{mandel2014offline}, dialog agents~\citep{jaques2019way}, and robotics~\cite{gu2017deep,kalashnikov2018qtopt}, the deployment of a new data-collection policy may be associated with a number of costs and risks.
If we can learn tasks with a small number of data collection policies, we can substantially reduce these costs and risks.

Based on this idea, we propose a novel measure of RL algorithm performance, namely \emph{deployment efficiency}, which counts the number of changes in the data-collection policy during learning, as illustrated in Figure~\ref{fig:eyecatcher}. %
This concept may be seen in contrast to \emph{sample efficiency} or \emph{data efficiency}~\cite{precup2001off,degris2012off,gu2016qprop,Haarnoja:2018uya,Lillicrap2016,nachum2018data}, 
which measures the amount of environment interactions incurred during training, without regard to how many distinct policies were deployed to perform those interactions.
Even when the data efficiency is high, the deployment efficiency could be low, since many on-policy and off-policy algorithms alternate data collection with each policy update~\citep{Schulman:2015uk,Lillicrap2016,gu2016continuous,Haarnoja:2018uya}.
Such dependence on high-frequency policy deployments is best illustrated in the recent works in offline RL~\cite{Fujimoto:2018td,jaques2019way,kumar2019stabilizing,levine2020offline,Wu:2019wl}, where baseline off-policy algorithms exhibited poor performance when trained on a static dataset.
These offline RL works, however, limit their study to a single deployment, which is enough for achieving high performance with data collected from a sub-optimal behavior policy, but often not from a random policy.
In contrast to those prior works, we aim to learn successful policies from scratch with minimal amounts of data and deployments.
Many existing model-free offline RL algorithms~\citep{levine2020offline} are tuned and evaluated on large datasets (e.g., one million transitions). 
In order to develop an algorithm that is both sample-efficient and deployment-efficient, each iteration of the algorithm between successive deployments has to work effectively on much smaller dataset sizes.
We believe model-based RL is better suited to this setting due to its higher demonstrated sample efficiency than model-free RL~\cite{Kurutach:2018wq, Nagabandi2018}. 
Although the combination of model-based RL and offline or limited-deployment settings seems straight-forward, we find this na\"{i}ve approach leads to poor performance.
This problem can be attributed to \emph{extrapolation errors}~\cite{Fujimoto:2018td} similar to those observed in model-free methods. Specifically, the learned policy may choose sequences of actions which lead it to regions of the state space where the dynamics model cannot predict properly, due to poor coverage of the dataset. This can lead the policy to exploit approximation errors of the dynamics model and be disastrous for learning.
In model-free settings, similar data distribution shift problems are typically remedied by regularizing policy updates explicitly with a divergence from the observed data distribution~\cite{jaques2019way,kumar2019stabilizing,Wu:2019wl}, which, however, can overly limit policies' expressivity~\cite{sohn2020brpo}.

In order to better approach these problems arising in limited deployment settings, we propose Behavior-Regularized Model-ENsemble (BREMEN), which learns an ensemble of dynamics models in conjunction with a policy using imaginary rollouts while \emph{implicitly} regularizing the learned policy via appropriate parameter initialization and conservative trust-region learning updates.
We evaluate BREMEN on high-dimensional continuous control benchmarks and find that it achieves impressive deployment efficiency. BREMEN is able to learn successful policies with only 5-10 deployments, significantly outperforming existing off-policy and offline RL algorithms in this deployment-constrained setting.
We further evaluate BREMEN on standard offline RL benchmarks, where only a single static dataset is used. In this fixed-batch setting, our experiments show that BREMEN can not only achieve performance competitive with state-of-the-art when using standard dataset sizes but also learn with 10-20 times smaller datasets, which previous methods are unable to attain.

\begin{figure}[t]
    \centering
    \includegraphics[width=\linewidth]{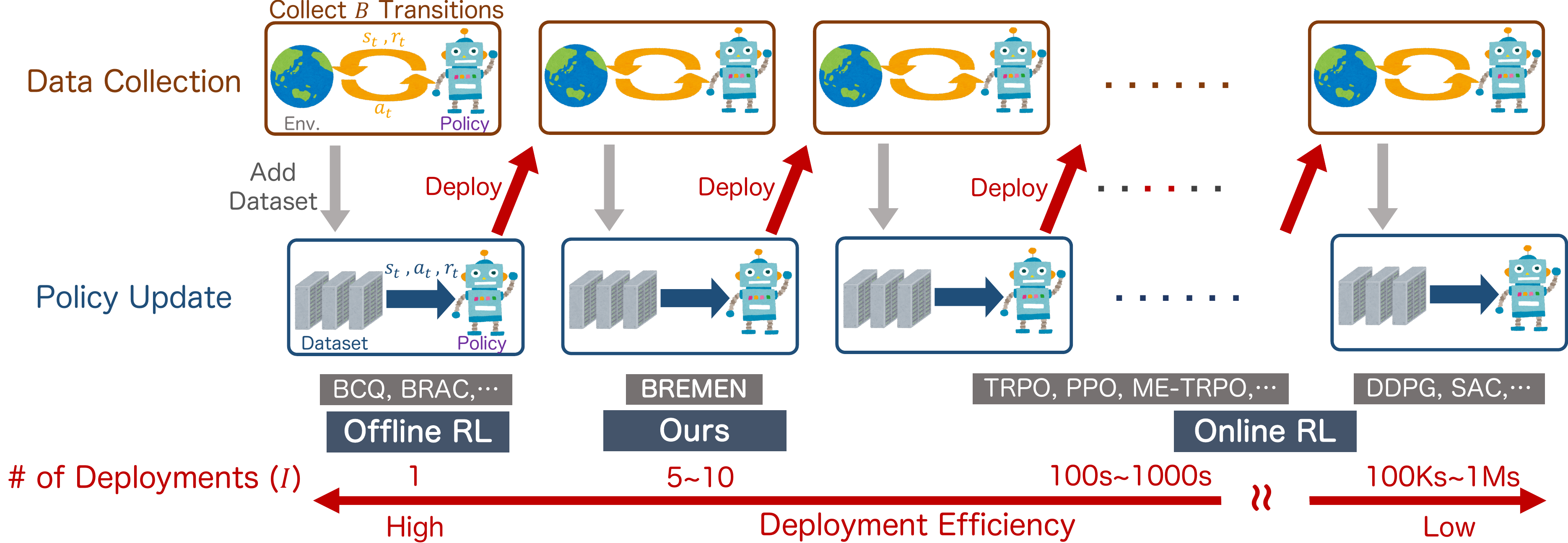}\label{fig:eyecatcher:deployment}
    \caption{
        \textit{Deployment efficiency} is defined as the number of changes in the data-collection policy~($I$), which is vital for managing costs and risks of new policy deployment. 
        Online RL algorithms typically require many iterations of policy deployment and data collection, which leads to extremely low deployment efficiency. In contrast, most pure offline algorithms consider updating a policy from a fixed dataset without additional deployment and often fail to learn from a randomly initialized data-collection policy. Interestingly, most state-of-the-art off-policy algorithms are still evaluated in heavily online settings. For example, SAC~\cite{Haarnoja:2018uya} collects one sample per policy update, amounting to 100,000 to 1 million deployments for learning standard benchmark domains.
        \vspace{-0.43cm}
    }
    \label{fig:eyecatcher}
    
\end{figure}

\section{Preliminaries}
We consider a Markov Decision Process~(MDP) setting, characterized by the tuple~$\mathcal{M}=(\mathcal{S},\mathcal{A},p,r,\gamma)$, where~$\mathcal{S}$ is the state space, $\mathcal{A}$ is the action space,~$p(s'|s,a)$ is the transition probability distribution or dynamics,~$r(s)$ is the reward function and~$\gamma \in (0,1)$ is the discount factor.
A policy~$\pi$ is a function that determines the agent behavior, mapping from states to probability distributions over actions.
The goal is to obtain the optimal policy~$\pi^{\ast}$ as
\begin{equation}
    \pi^{\ast} = \argmax_{\pi} \eta[\pi] = \argmax_{\pi} \mathbb{E}_{\pi} \left[\sum_{t=0}^{\infty}\gamma^t r(s_t)\right], \nonumber
    \label{eq:preliminary}
\end{equation}
where $\eta[\pi]$ is the expectation of the discounted sum of rewards under the policy~$\pi$.
The transition probability~$p(s'|s,a)$ is usually unknown, and it is estimated with a parameterized dynamics model~$f_{\phi}$ (e.g., a neural network) in model-based RL.
For simplicity, we assume that the reward function~$r(s)$ is known, and the reward can be computed for any arbitrary state, but we can easily extend to the unknown setting and predict it using a parameterized function.

\textbf{On-policy vs Off-policy, Online vs Offline} At high-level, most RL algorithms iterate many times between collecting a batch of transitions (deployments) and optimizing the policy (learning). If the algorithms discard data after each policy update, they are \textit{on-policy}~\citep{Schulman:2015uk,Schulman2017PPO}, while if they accumulate data in a buffer $\mathcal{D}$, i.e. experience replay~\citep{lin1992self}, they are \textit{off-policy}~\citep{mnih2015human,Lillicrap2016,gu2016continuous,gu2016qprop,Haarnoja:2018uya,Fujimoto:2018td} because not all the data in buffer comes from the current policy.
However, we consider all these algorithms to be \textit{online} RL algorithms, since they involve many deployments during learning, ranging from hundreds to millions.
On the other hand, in pure \textit{offline} RL, one does not assume direct interaction and learns a policy from only a fixed dataset, which effectively corresponds to a single deployment allowed for learning.
Classically, interpolating these two extremes were semi-batch RL algorithms~\cite{lange2012batch,Satinder1994}, which improve the policy through repetitions of collecting a large batch of transitions~$\mathcal{D}=\{(s,a,s',r)\}$ and performing many or full policy updates. While these semi-batch RL also realize good deployment efficiency, they have not been extensively studied with neural network function approximators or in off-policy settings with experience replay for scalable sample-efficient learning.
In our work, we aim to have both high deployment efficiency and sample efficiency by developing an algorithm that can solve the tasks with minimal policy deployments as well as transition samples.

\section{Deployment Efficiency}
\label{sec:def-deploymentefficiency}
Deploying a new policy for data collection can be associated with a number of costs and risks for many real-world applications like medicine or robotic control~\cite{murphy2001marginal,mandel2014offline,gu2017deep,kalashnikov2018qtopt,nachum2019multi}. While there is an abundance of works on safety for RL~\citep{chow2015risk,eysenbach2017leave,chow2018lyapunov,raybenchmarking,chow2019lyapunov}, these methods often do not provide guarantees in practice when combined with neural networks and stochastic optimization. It is therefore necessary to validate each policy before deployment. Due to the cost associated with each deployment, it is desirable to minimize the number of distinct deployments needed during the learning process.

In order to focus research on these practical bottlenecks, we propose a novel measure of RL algorithms, namely, \emph{deployment efficiency}, which counts how many times the data-collection policy has been changed during improvement from random policy to solve the task.
For example, if an RL algorithm operates by using its learned policy to collect transitions from the environment $I$ times, each time collecting a batch of $B$ new transitions, then the number of deployments is $I$, while the total number of samples collected is $I\times B$. The lower $I$ is, the more deployment-efficient the algorithm is; in contrast, sample efficiency looks at $I\times B$.
Online RL algorithms, whether they are on-policy or off-policy, typically update the policy and acquire new transitions by deploying the newly updated policy at every iteration.
This corresponds to performing hundreds to millions of deployments during learning on standard benchmarks~\citep{Haarnoja:2018uya}, which is severely deployment inefficient.
On the other hand, offline RL literature only studies the case of 1 deployment.
A deployment-efficient algorithm would stand in the middle of these two extremes and ideally learn a successful policy from scratch while deploying only a few distinct policies, as illustrated in Figure~\ref{fig:eyecatcher}.

Recent deep RL literature seldom emphasizes deployment efficiency, with few exceptions in specific applications~\cite{kalashnikov2018qtopt} where such a learning procedure is necessary. 
Although current state-of-the-art algorithms on continuous control have substantially improved sample or data efficiency, they have not optimized for deployment efficiency.
For example, SAC~\cite{Haarnoja:2018uya}, an efficient model-free off-policy algorithm, performs half a million to one million policy deployments during learning on MuJoCo~\cite{todorov2012mujoco} benchmarks. ME-TRPO~\cite{Kurutach:2018wq}, a model-based algorithm, performs a much lower 100-300 policy deployments, although this is still relatively high for practical settings.\footnote{We examined the number of deployments by checking their original implementations, while the frequency of data collection is a tunable hyper-parameter.}
In our work, we demonstrate successful learning on standard benchmark environments with only 5-10 deployments.

\section{Behavior-Regularized Model-Ensemble}
To achieve high deployment efficiency, we propose Behavior-Regularized Model-ENsemble~(BREMEN).
BREMEN incorporates Dyna-style~\cite{sutton1991dyna} model-based RL, learning an ensemble of dynamics models in conjunction with a policy using imaginary rollouts from the ensemble and behavior regularization via conservative trust-region updates.

\subsection{Imaginary Rollout from Model Ensemble}
As in recent Dyna-style model-based RL methods~\cite{Kurutach:2018wq,Wang:2019vw}, BREMEN uses an ensemble of $K$ deterministic dynamics models~$\hat{f}_{\phi}=\left\{\hat{f}_{\phi_{1}}, \dots, \hat{f}_{\phi_{K}}\right\}$ to alleviate the problem of model bias.
Each model~$\hat{f}_{\phi_{i}}$ is parameterized by~${\phi_{i}}$ and trained by the following objective, which minimizes mean squared error between the prediction of next state~$\hat{f}_{\phi_{i}}(s_t,a_t)$ and true next state~$s_{t+1}$ over a dataset~$\mathcal{D}$:
\begin{equation}
    \label{eq:model-obj}
    \min _{\phi_i} \frac{1}{|\mathcal{D}|} \sum_{\left(s_{t}, a_{t}, s_{t+1}\right) \in \mathcal{D}}\frac{1}{2}\left\|s_{t+1}-\hat{f}_{\phi_i}\left(s_{t}, a_{t}\right)\right\|_{2}^{2}.
\end{equation}
During training of a policy $\pi_\theta$, imagined trajectories of states and actions are generated sequentially, using a dynamics model~$\hat{f}_{\phi_{i}}$ that is randomly selected at each time step: 
\begin{equation}
   \label{eq:rollout}
    a_{t} \sim \pi_{\theta}(\cdot|\hat{s}_{t}), \quad \hat{s}_{t+1} = \hat{f}_{\phi_{i}}(\hat{s}_{t},a_t) \quad \text{where} \quad i\sim \{1\cdots K\}.
\end{equation}

\subsection{Policy Update with Behavior Regularization}
In order to manage the discrepancy between the true dynamics and the learned model caused by the distribution shift in batch settings, we propose to use iterative policy updates via a trust-region constraint, re-initialized with a behavior-cloned policy after every deployment.
Specifically, after each deployment, we are given an updated dataset of experience transitions $\mathcal{D}$. With this dataset, we approximate the true behavior policy~$\pi_{b}$ through behavior cloning~(BC), utilizing a neural network $\hat{\pi}_{\beta}$ parameterized by~$\beta$, where we implicitly assume a fixed variance, a common practice in BC~\citep{rajeswaran2017learning}:
\begin{equation}
    \label{eq:bc-obj}
    \min _{\beta} \frac{1}{|\mathcal{D}|} \sum_{\left(s_{t}, a_{t}\right) \in \mathcal{D}}\frac{1}{2}\left\|a_{t}-\hat{\pi}_{\beta}\left(s_{t}\right)\right\|_{2}^{2}.
\end{equation}
After obtaining the estimated behavior policy, we initialize the target policy~$\pi_{\theta}$ as a Gaussian policy with mean from~$\hat{\pi}_{\beta}$ and standard deviation of $1$.
This BC initialization in conjunction with gradient descent based optimization may be seen as implicitly biasing the optimized $\pi_\theta$ to be close to the data-collection policy~\cite{nagarajan2019generalization}, and thus works as a remedy for the distribution shift problem~\citep{ross2011reduction}.
To further bias the learned policy to be close to the data-collection policy, we opt to use a KL-based trust-region optimization~\cite{Schulman:2015uk}. 
Therefore, the optimization of BREMEN becomes
\begin{align}
    \label{eq:trpo-obj}
    \theta_{k+1} &= \argmax_{\theta} \underset{s, a\sim \pi_{\theta_{k}}, \hat{f}_{\phi_i}}{\mathbb{E}}\left[\frac{\pi_{\theta}(a | s)}{\pi_{\theta_{k}}(a | s)} A^{\pi_{\theta_{k}}}(s, a)\right]\\ \nonumber
    &\text{s.t.} \quad \underset{s \sim \pi_{\theta_k}, \hat{f}_{\phi_i}}{\E}\left[\KL\left(\pi_{\theta}(\cdot | s) \| \pi_{\theta_{k}}(\cdot | s)\right)\right] \leq \delta,  \quad \pi_{\theta_0}=\mathrm{Normal}(\hat{\pi}_{\beta}, 1),%
\end{align}
where $A^{\pi_{\theta_{k}}}(s, a)$ is the advantage of $\pi_{\theta_k}$ computed using model-based rollouts in the learned dynamics model and $\delta$ is the maximum step size.

The combination of BC for initialization and finite iterative trust-region updates serves as an implicit KL regularization, as discussed in Section~\ref{sec:math_intuition}. This is in contrast to many previous offline RL algorithms that augment the value function with a penalty of explicit KL divergence~\cite{Siegel2020KeepDW,Wu:2019wl} or maximum mean discrepancy~\cite{kumar2019stabilizing}.
Empirically, we found that our regularization technique outperforms the explicit KL penalty (see Section~\ref{sec:implicit-kl}).
By recursively performing offline procedure, BREMEN can be used for deployment-efficient learning as shown in Algorithm~\ref{alg:bremen_deploy}, starting from a randomly initialized policy, collecting experience data, and performing offline policy updates.

\begin{algorithm}[t]
    \small
    \caption{BREMEN for Deployment-Efficient RL}
    \label{alg:bremen_deploy}    
    \renewcommand{\algorithmicrequire}{\textbf{Input:}}
    \begin{algorithmic}[1]          
        \REQUIRE Empty dataset $\mathcal{D}_{all}$, $\mathcal{D}$, Initial parameters $\phi = \{\phi_1,\cdots,\phi_K\}$, $\beta$, Number of policy optimization $T$, Number of deployments $I$.
        \STATE Randomly initialize the target policy $\pi_{\theta}$
        \FOR{deployment $i=1,\cdots,I$}
            \STATE Collect $B$ transitions in the true environment using $\pi_{\theta}$ and add them to dataset \\ $\mathcal{D}_{all} \leftarrow \mathcal{D}_{all} \cup \{s_t,a_t,r_t,s_{t+1}\}$, $\mathcal{D} \leftarrow \{s_t,a_t,r_t,s_{t+1}\}$
            \STATE Train $K$ dynamics models $\hat{f}_{\phi}$ using $\mathcal{D}_{all}$  via Eq.~\ref{eq:model-obj}.
            \STATE Train estimated behavior policy $\hat{\pi}_{\beta}$ using $\mathcal{D}$ by behavior cloning via Eq.~\ref{eq:bc-obj}.
            \STATE Re-initialize target policy $\pi_{\theta_0}=\mathrm{Normal}(\hat{\pi}_{\beta}, 1)$.
            \FOR{policy optimization $k=1,\cdots,T$}
                \STATE Generate imaginary rollout via Eq.~\ref{eq:rollout}.
                \STATE Optimize target policy $\pi_{\theta}$ satisfying Eq.~\ref{eq:trpo-obj} with the rollout.
            \ENDFOR
        \ENDFOR
    \end{algorithmic}
\end{algorithm}

\subsection{Implicit KL Control from a Mathematical Perspective}
\label{sec:math_intuition}
We can intuitively understand that behavior cloning initialization with trust-region updates works as a regularization of distributional shift, and this can be supported by theory.
Following the notation of~\citet{janner2019trust}, we denote the generalization error of a dynamics model on the state distribution under the true behavior policy as~$\epsilon_{m} = \max_{t} \mathbb{E}_{s\sim d^{\pi_{b}}_t} D_{TV} (p(s_{t+1}|s_t,a_t)||p_{\phi}(s_{t+1}|s_t,a_t))$, where $D_{TV}$ represents the total variation distance between true dynamics $p$ and learned model $p_{\phi}$.
We also denote the distribution shift on the target policy as~$\max_{s} D_{TV}(\pi_{b}||\pi) \leq \epsilon_{\pi}$.
A bound relating the true returns $\eta[\pi]$ and the model returns $\hat{\eta}[\pi]$ on the target policy is given in~\citet{janner2019trust} as,
\begin{equation}
    \label{eq:mbpo_bound}
    \eta[\pi] \geq \hat{\eta}[\pi] - \left[\frac{2\gamma r_{max}(\epsilon_{m}+2\epsilon_{\pi})}{(1-\gamma)^2} + \frac{4r_{max}\epsilon_{\pi}}{(1-\gamma)}\right].
\end{equation}
This bound guarantees the improvement under the true returns as long as the improvement under the model returns increases by more than the slack in the bound due to $\epsilon_m,\epsilon_\pi$~\cite{janner2019trust,levine2020offline}.

We may relate this bound to the specific learning employed by BREMEN, which includes dynamics model learning, behavior cloning policy initialization, and conservative KL-based trust-region policy updates. To do so, we consider an \emph{idealized} version of BREMEN, where the expectations over states in equations \ref{eq:model-obj},~\ref{eq:bc-obj},~\ref{eq:trpo-obj} are replaced with supremums and the dynamics model is set to have unit variance.
\begin{proposition}[Policy and model error bound]
    \label{prop:policy_model_error}
    Suppose we apply the idealized BREMEN on a dataset $\mathcal{D}$, and define $\epsilon_\beta,\epsilon_\phi$ in terms of the behavior cloning and dynamics model losses as,
    \begin{align*}
        \epsilon_\beta &:= \sup_{s} \E_{a\sim\mathcal{D}(-|s)} [\left\|a_{t}-\hat{\pi}_{\beta}\left(s_{t}\right)\right\|_{2}^{2} / 2]  - \mathcal{H}(\pi_b(-|s))
        \\
        \epsilon_\phi &:= \sup_{s,a} \E_{s'\sim\mathcal{D}(-|s,a)} \left[\|s' - \hat{f}_\phi(s,a)\|_2^2 / 2 \right] - \mathcal{H}(p(-|s,a)),
    \end{align*}
    where $\mathcal{H}$ denotes the Shannon entropy.
    If one then applies $T$ KL-based trust-region steps of step size $\delta$ (equation~\ref{eq:trpo-obj}) using stochastic dynamics models with mean $\hat{f}_\phi$ and standard deviation 1,
    then
    \begin{equation*}
        \epsilon_\pi \le \sqrt{\frac{1}{2}\epsilon_\beta +\frac{1}{4}\log 2\pi} + T\sqrt{\frac{1}{2}\delta}
        ~;~~~
        \epsilon_m \le \sqrt{\frac{1}{2}\epsilon_\phi + \frac{1}{4}\log 2\pi}.
    \end{equation*}
\end{proposition}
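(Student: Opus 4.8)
The plan is to recognize that, for a fixed-variance Gaussian, the squared-error training objectives appearing in $\epsilon_\beta$ and $\epsilon_\phi$ are exactly KL divergences up to an additive constant, then to pass from KL to total variation via Pinsker's inequality, and finally to accumulate the trust-region steps with the triangle inequality for $D_{TV}$.

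First I would make the Gaussian identity explicit. Since the idealized behavior-cloned policy is $\hat\pi_\beta(\cdot|s)=\mathrm{Normal}(\hat\pi_\beta(s),1)$, its negative log-likelihood satisfies $-\log\hat\pi_\beta(a|s)=\tfrac12\|a-\hat\pi_\beta(s)\|_2^2+\tfrac12\log 2\pi$, where I adopt the scalar-action convention that fixes the constant in the normalizer. Taking $\E_{a\sim\pi_b(\cdot|s)}$ of both sides produces the cross-entropy $-\E_{a\sim\pi_b}\log\hat\pi_\beta(a|s)$, and subtracting the entropy $\mathcal{H}(\pi_b(\cdot|s))$ turns the left side into $\KL(\pi_b(\cdot|s)\,\|\,\hat\pi_\beta(\cdot|s))$. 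Taking the supremum over $s$ and invoking the definition of $\epsilon_\beta$ then yields
\[
\sup_s \KL\big(\pi_b(\cdot|s)\,\|\,\hat\pi_\beta(\cdot|s)\big) \;=\; \epsilon_\beta + \tfrac12\log 2\pi.
\]

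Next I would apply Pinsker's inequality, $D_{TV}\le\sqrt{\tfrac12\KL}$, bounding the TV gap of the BC-initialized policy by $\sqrt{\tfrac12\epsilon_\beta+\tfrac14\log 2\pi}$ uniformly over $s$. For the trust-region phase, the idealized constraint in Eq.~\ref{eq:trpo-obj} replaces the state expectation with a supremum, so each of the $T$ updates guarantees $\sup_s\KL(\pi_{\theta_k}(\cdot|s)\,\|\,\pi_{\theta_{k-1}}(\cdot|s))\le\delta$, hence $\sup_s D_{TV}(\pi_{\theta_k}\,\|\,\pi_{\theta_{k-1}})\le\sqrt{\tfrac12\delta}$ by Pinsker again (the direction of the KL is irrelevant because $D_{TV}$ is symmetric). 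Chaining these with the triangle inequality,
\[
D_{TV}(\pi_b\,\|\,\pi_{\theta_T}) \le D_{TV}(\pi_b\,\|\,\pi_{\theta_0}) + \sum_{k=1}^T D_{TV}(\pi_{\theta_{k-1}}\,\|\,\pi_{\theta_k}),
\]
and using $\pi_{\theta_0}=\hat\pi_\beta$ together with the two bounds above gives the claimed estimate on $\epsilon_\pi$.

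The model bound follows the identical template but without the iterative part. I would write $-\log p_\phi(s'|s,a)=\tfrac12\|s'-\hat f_\phi(s,a)\|_2^2+\tfrac12\log 2\pi$, take $\E_{s'\sim p(\cdot|s,a)}$, subtract $\mathcal{H}(p(\cdot|s,a))$, and take the supremum over $(s,a)$ to obtain $\sup_{s,a}\KL(p\,\|\,p_\phi)=\epsilon_\phi+\tfrac12\log 2\pi$; Pinsker then yields the pointwise bound $\sqrt{\tfrac12\epsilon_\phi+\tfrac14\log 2\pi}$. Since $\epsilon_m$ is an expectation of this TV distance under $d_t^{\pi_b}$, it is dominated by the uniform supremum, which closes the claim. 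The main thing to watch is bookkeeping rather than depth: getting the constant $\tfrac14\log 2\pi$ to come out exactly relies on the scalar/unit-dimension convention in the Gaussian normalizer, and the chaining argument must be phrased for $D_{TV}$ (a genuine metric) rather than for the KL divergence (which satisfies no triangle inequality). Everything else reduces to two invocations of Pinsker.
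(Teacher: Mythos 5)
Your proposal is correct and follows essentially the same route as the paper's own proof: the squared-error losses are rewritten as Gaussian negative log-likelihoods so that loss minus entropy becomes a KL divergence, Pinsker's inequality converts KL to total variation for both the BC-initialized policy and each of the $T$ trust-region steps, and the triangle inequality for $D_{TV}$ chains the steps together. Your explicit remarks on the scalar-dimension convention for the $\tfrac12\log 2\pi$ normalizer and on dominating the expectation in $\epsilon_m$ by the supremum are just slightly more careful statements of bookkeeping the paper leaves implicit.
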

\begin{proof}
See Appendix~\ref{sec:proof_of_propotion}.
\end{proof}

\section{Experiments}
We evaluate BREMEN in both deployment-efficient settings, where the algorithm must learn a policy from scratch via a limited number of deployments, and offline RL, where the algorithm is given only a single static dataset.
We use four standard continuous control benchmarks for offline RL~\cite{kumar2019stabilizing,Wu:2019wl}, namely, Ant, HalfCheetah, Hopper, and Walker2d on the MuJoCo physics simulator~\cite{todorov2012mujoco}.
See Appendix~\ref{sec:experimental_appendix} and~\ref{sec:additional_results} for further details and results.

\subsection{Evaluating Deployment Efficiency}
\label{sec:exp_dep_eff}
We compare BREMEN to ME-TRPO, SAC, BCQ, and BRAC applied to limited deployment settings. 
To adapt offline methods~(BCQ, BRAC) to this setting, we simply apply them in a recursive fashion;\footnote{Recursive BCQ and BRAC also do behavioral cloning-based policy initialization after each deployment.} at each deployment iteration, we collect a batch of data with the most recent policy and then run the offline update with this dataset.
As for SAC, we simply change the replay buffer to update only at specific deployment intervals.
For the sake of comparison, we align the number of deployments and the amount of data collection at each deployment~(either 100,000 or 200,000) for all methods.
Figure~\ref{fig:deployment_result} shows the results with 200,000~(top) and 100,000~(bottom) batched transitions per deployment.
Regardless of the environments and the batch size per update, BREMEN achieves remarkable performance while existing online and offline RL methods struggle to make any progress in the limited deployment settings. 
As a point of comparison, we also include results for online SAC and ME-TRPO without limits on the number of deployments but using the same number of transitions.

\begin{figure}[t]
    \centering
    \includegraphics[width=\linewidth]{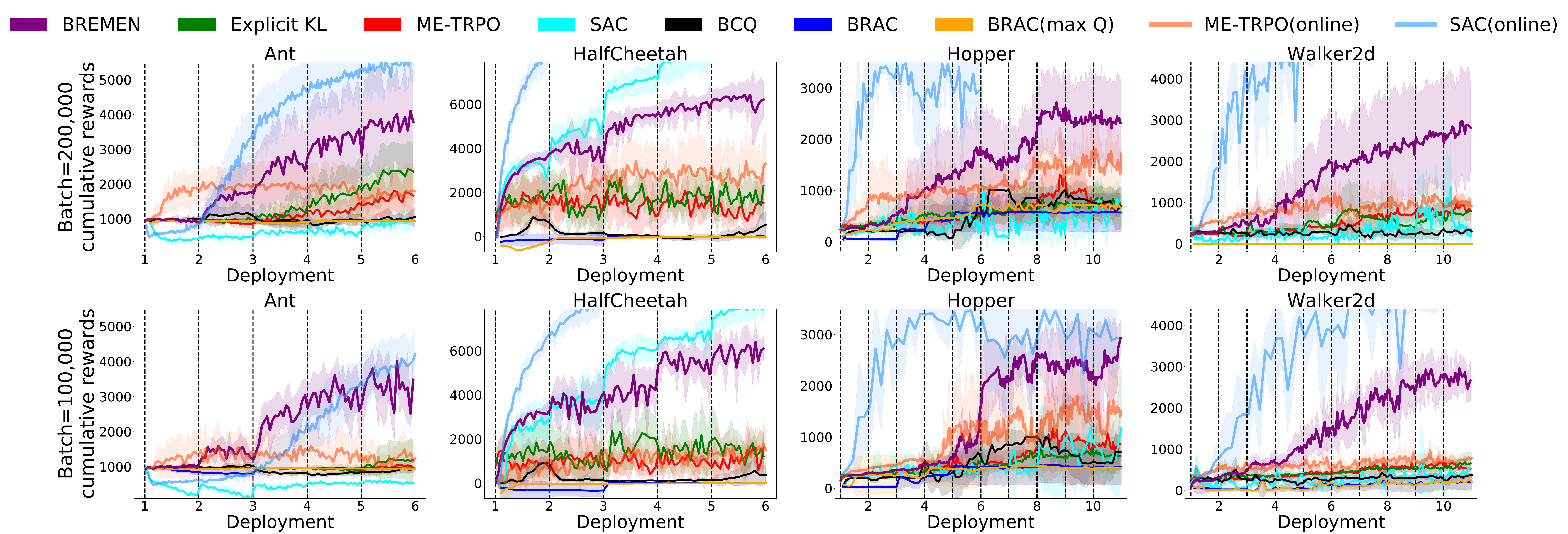}
    \caption{
    Evaluation of BREMEN with the existing methods~(ME-TRPO, SAC, BCQ, BRAC) under deployment constraints (to 5-10 deployments with batch sizes of 200k and 100k).
    The average cumulative rewards and their standard deviations with 5 random seeds are shown.
    Vertical dotted lines represent where each policy deployment and data collection happen.
    BREMEN is able to learn successful policies with only 5-10 deployments, while the state-of-the-art off-policy (SAC), model-based (ME-TRPO), and recursively-applied offline RL algorithms (BCQ, BRAC) often struggle to make any progress.
    For completeness, we show ME-TRPO(online) and SAC(online) which are their original optimal learning curves without deployment constraints, plotted with respect to samples normalized by the batch size. While SAC(online) substantially outperforms BREMEN in sample efficiency, it uses 1 deployment per sample, leading to 100k-500k deployments required for learning. Interestingly, BREMEN achieves even better performance than the original ME-TRPO(online), suggesting the effectiveness of implicit behavior regularization. For SAC and ME-TRPO under deployment-constrained evaluation, their batch size between policy deployments differs substantially from their standard settings, and therefore we performed extensive hyper-parameter search on the relevant parameters such as the number of policy updates between deployments, as discussed in Appendix~\ref{sec:hyper_deploy_rl}. 
    \label{fig:deployment_result}}
    \vspace{-0.5cm}
\end{figure}

\subsection{Evaluating Offline Learning}
\label{sec:exp_off}
We also evaluate BREMEN on standard offline RL benchmarks following~\citet{Wu:2019wl}. We first train online SAC to a certain cumulative reward threshold, 4,000 in HalfCheetah, 1,000 in Ant, Hopper, and Walker2d, and collect offline datasets. %
We evaluate agents with the offline dataset of one million~(1M) transitions, which is standard for BCQ and BRAC~\cite{Wu:2019wl}. We then evaluate them on much smaller datasets of 50k and 100k transitions, 5$\sim$10 \% of prior works. 

Table~\ref{table:offline_rl} shows that BREMEN can achieve performance competitive with state-of-the-art model-free offline RL algorithms when using the standard dataset size of 1M.
Moreover, BREMEN can also appropriately learn with 10-20 times smaller datasets, where BCQ and BRAC are unable to exceed even BC baseline.
As a result, our recursive BREMEN algorithm is not only deployment-efficient but also sample-efficient, and significantly outperforms the baselines. 

\begin{table}[t]
  \caption{Comparison of BREMEN to the existing offline methods on static datasets. Each cell shows the average cumulative reward and their standard deviation, where the number of samples is 1M, 100K, and 50K, respectively. The maximum steps per episode is 1,000. BRAC applies a primal form of KL value penalty, and BRAC (max Q) means its variant of  sampling multiple actions and taking the maximum according to the learned Q function.}
  \label{table:offline_rl}
  \small
  \centering
    \begin{tabular}{l|c|c|c|c}
    \bhline{1.1pt}
    \multicolumn{5}{c}{\textbf{1,000,000~(1M) transitions}}\\
    \hline
    Method &Ant &HalfCheetah &Hopper &Walker2d \\
    \hline
    Dataset &1191 &4126 &1128 &1376 \\
    BC &1321$\pm$141 & 4281$\pm$12 & 1341$\pm$161 & 1421$\pm$147 \\
    BCQ~\cite{Fujimoto:2018td} & 2021$\pm$31 & 5783$\pm$272 & 1130$\pm$127 & 2153$\pm$753 \\
    BRAC~\cite{Wu:2019wl} & 2072$\pm$285 & 7192$\pm$115 & 1422$\pm$90 & 2239$\pm$1124 \\
    BRAC~(max Q) & 2369$\pm$234 & 7320$\pm$91 & 1916$\pm$343 &\textbf{2409$\pm$1210} \\
    BREMEN~(Ours) & \textbf{3328$\pm$275} & \textbf{8055$\pm$103} & \textbf{2058$\pm$852} & 2346$\pm$230 \\
    ME-TRPO~(offline)~\cite{Kurutach:2018wq} & 1258$\pm$550 & 1804$\pm$924 & 518$\pm$91 & 211$\pm$154 \\
    \bhline{1.1pt}
    \multicolumn{5}{c}{\textbf{100,000~(100K) transitions}}\\
    \hline
    Method &Ant &HalfCheetah &Hopper &Walker2d \\
    \hline
    Dataset &1191 &4066 &1128 &1376 \\
    BC & 1330$\pm$81 & 4266$\pm$21 & 1322$\pm$109 & 1426$\pm$47 \\
    BCQ & 1363$\pm$199 & 3915$\pm$411 & 1129$\pm$238 & \textbf{2187$\pm$196} \\
    BRAC & -157$\pm$383 & 2505$\pm$2501 & 1310$\pm$70 & 2162$\pm$1109 \\
    BRAC~(max Q) & -226$\pm$387 & 2332$\pm$2422 & 1422$\pm$101 & 2164$\pm$1114 \\
    BREMEN~(Ours) & \textbf{1633$\pm$127} & \textbf{6095$\pm$370} & \textbf{2191$\pm$455} & 2132$\pm$301 \\
    ME-TRPO~(offline) & 974$\pm$4 & 2$\pm$434 & 307$\pm$170 & 10$\pm$61 \\
    \bhline{1.1pt}
    \multicolumn{5}{c}{\textbf{50,000~(50K) transitions}}\\
    \hline
    Method &Ant &HalfCheetah &Hopper &Walker2d \\
    \hline
    Dataset &1191 &4138 &1128 &1376 \\
    BC & 1270$\pm$65 & 4230$\pm$49 & 1249$\pm$61 & 1420$\pm$194 \\
    BCQ & 1329$\pm$95 & 1319$\pm$626 & 1178$\pm$235 & 1841$\pm$439 \\
    BRAC &-878$\pm$244 & -597$\pm$73 & 1277$\pm$102 & 976$\pm$1207 \\
    BRAC~(max Q) & -843$\pm$279 & -590$\pm$56 & 1276$\pm$225 & 903$\pm$1137 \\
    BREMEN~(Ours) & \textbf{1347$\pm$283} & \textbf{5823$\pm$146} & \textbf{1632$\pm$796} & \textbf{2280$\pm$647} \\
    ME-TRPO~(offline) & 938$\pm$32 & -73$\pm$95 & 152$\pm$13 & 176$\pm$343 \\
    \bhline{1.1pt}
  \end{tabular}
\end{table}

\subsection{Evaluating Effectiveness of Implicit KL Control}
\label{sec:implicit-kl}
In this section, we present an experiment to better understand the effect of BREMEN's implicit regularization. 
Figure~\ref{fig:eval_deploy_kl} shows the KL divergence of learned policies from the last deployed policy.
We compare BREMEN to variants of BREMEN that use an explicit KL penalty on value instead of BC initialization~(conservative KL trust-region updates are still used).
We find that the explicit KL without behavior initialization variants learn policies that move farther away from the last deployed policy than behavior initialized policies.
This suggests that the implicit behavior regularization employed by BREMEN is more effective as a conservative policy learning protocol.
\begin{figure}[t]
    \centering
    \includegraphics[width=\linewidth]{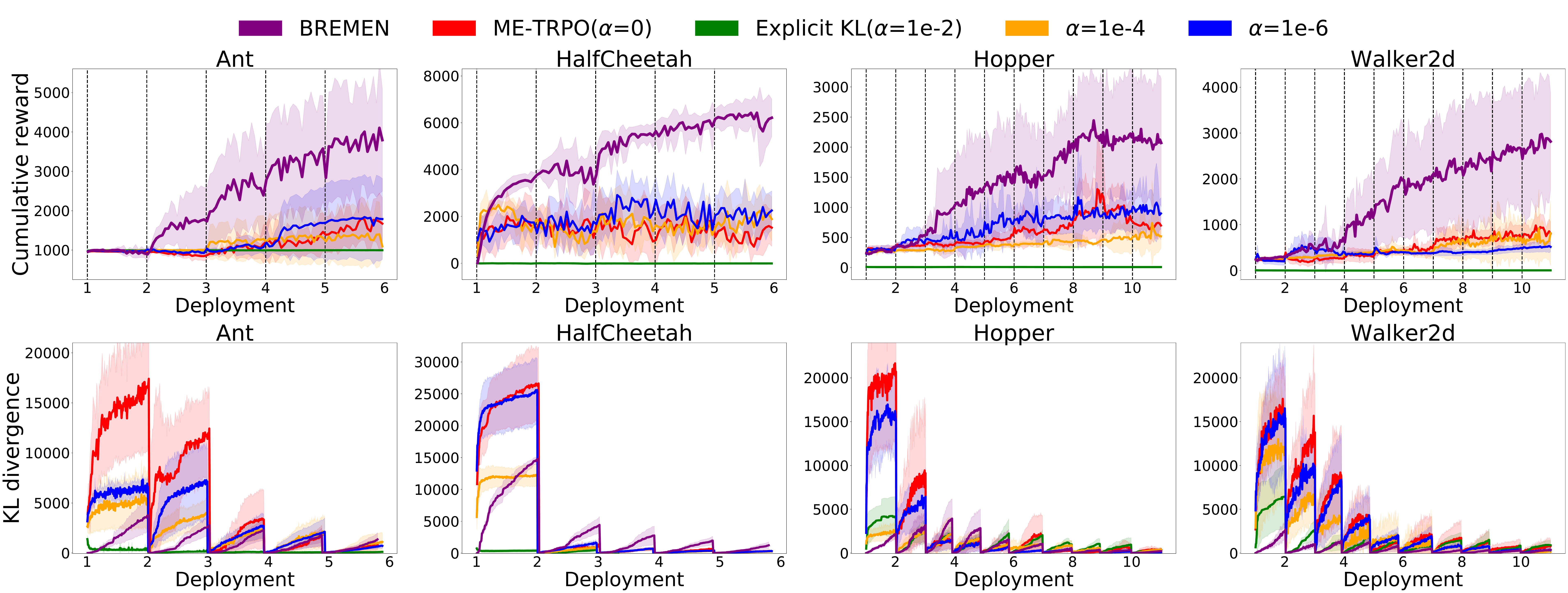}
    \caption{We examine average cumulative rewards~(top) and corresponding KL divergence between the last deployed policy and the target policy~(bottom) with batch size 200K in limited deployment settings. The behavior initialized policy remains close to the last deployed policy during improvement without explicit value penalty $-\alpha \KL(\pi_{\theta} \| \hat{\pi}_{\beta})$. The explicit penalty is controlled by a coefficient $\alpha$.\label{fig:eval_deploy_kl}}
    \vspace{-0.5cm}
\end{figure}

\section{Related Work}

\paragraph{Deployment Efficiency and Offline RL}
Although we are not aware of any previous works which explicitly proposed the concept of deployment efficiency, its necessity in many real-world applications has been generally known.
One may consider previously proposed semi-batch RL algorithms~\cite{ernst2005tree,lange2012batch,singh1994learning,roux2016efficient} or theoretical analysis of switching cost under the tabular PAC-MDP settings~\cite{bai2019provably,guo2015conc} as approaching this issue.
More recently, a related but distinct problem known as offline RL has gained popularity~\cite{levine2020offline,Wu:2019wl,agarwal2019optimistic}.
These offline RL works consider an extreme version of 1 deployment, and typically collect the static batch with a partially trained policy rather than a random policy.
While offline RL has shown promising results for a variety of real-world applications, such as robotics~\cite{IRIS}, dialogue systems~\cite{jaques2019way}, or medical treatments~\cite{gottesman2018evaluating}, these algorithms struggle when learning a policy from scratch or when the dataset is small.
Nevertheless, common themes of many offline RL algorithms -- regularizing the learned policy to the behavior policy~\cite{Fujimoto:2018td,jaques2019way,kumar2019stabilizing,Siegel2020KeepDW,Wu:2019wl} and utilizing ensembles to handle uncertainty~\cite{kumar2019stabilizing,Wu:2019wl} -- served as inspirations for the proposed BREMEN algorithm.
A major difference of BREMEN from prior works is that the target policy is not explicitly forced to stick close to the estimated behavior policy through the policy update.
Rather, BREMEN employs a more implicit regularization by initializing the learned policy with a behavior cloned policy and then applying conservative trust-region updates.
Another major difference is the application of model-based approaches to fully offline settings, which has not been extensively studied in prior works~\cite{levine2020offline}, except the two concurrent works from~\citet{Kidambi2020MOReLM} and \citet{yu2020mopo} that study pessimistic or uncertainty penalized MDPs with guarantees -- closely related to~\citet{liu2019off}.
By contrast, our work shows that a simple technique can already enable model-based offline algorithms to significantly outperform the prior model-free methods, and is, to the best of our knowledge, the first to define and extensively evaluate deployment efficiency with recursive experiments.

\paragraph{Model-Based RL}
There are many types of model-based RL algorithms~\citep{sutton1991dyna,deisenroth2011pilco,heess2015learning}. A simple algorithmic choice is Dyna-style~\cite{sutton1991dyna}, which uses a parameterized dynamics model to estimate the true MDP transition function, stochastically mapping states and actions to next states. The dynamics model can then serve as a simulator of the environment during policy updates.
Dyna-style algorithms often suffer from the distributional shift, also known as model bias, which leads RL agents to exploit regions where the data is insufficient, and significant performance degradation.
A variety of remedies have been proposed to relieve the problem of model bias, such as the use of multiple dynamics models as an ensemble~\cite{chua2018deep,Kurutach:2018wq,janner2019trust}, meta-learning~\cite{clavera2018model}, energy-based model regularizer~\cite{boney2019regularizing}, game-theoretic framework~\cite{rajeswaran2020game}, and explicit reward penalty for unknown state~\cite{Kidambi2020MOReLM,yu2020mopo}.
Notably, we have employed a subset of these remedies -- model ensembles and trust-region updates~\cite{Kurutach:2018wq} -- for BREMEN.
Compared to existing works, our work is notable for using BC initialization in conjunction with trust-region updates to alleviate the distribution shift of the learned policy from the dataset used to train the dynamics model.

\section{Conclusion}
In this work, we introduced \emph{deployment efficiency}, a novel measure for RL performance that counts the number of changes in the data-collection policy during learning.
To enhance deployment efficiency, we proposed Behavior-Regularized Model-ENsemble~(BREMEN), a novel model-based offline algorithm with implicit KL regularization via appropriate policy initialization and trust-region updates. %
BREMEN shows impressive results in limited deployment settings, obtaining successful policies from scratch in only 5-10 deployments, as it can improve policies offline even when the batch size is 10-20 times smaller than prior works.
Not only can this help alleviate costs and risks in real-world applications, but it can also reduce the amount of communication required during distributed learning and could form the basis for communication-efficient large-scale RL in contrast to prior works~\cite{nair2015massively,espeholt2018impala,espeholt2019seed}.
Most critically, we show that under deployment efficiency constraints, most prior algorithms -- model-free or model-based, online or offline -- fail to achieve successful learning. We hope our work can gear the research community to value deployment efficiency as an important criterion for RL algorithms, and to eventually achieve similar sample efficiency and asymptotic performance as the state-of-the-art algorithms like SAC~\citep{Haarnoja:2018uya} while having the deployment efficiency well-suited for safe and practical real-world reinforcement learning. 
\section*{Broader Impact}
Deployment efficiency is a key-concept for real-world applications for RL because excessive policy deployments may be harmful or costly in robotics, health care, dialog agents, or education.
However, in most prior deep RL literatures and benchmarks, this metric is seldom mentioned and sometimes its disregard is exploited for the best sample-efficiency while effectively allowing 100k-1M of deployments.
Our proposed algorithm BREMEN achieves high deployment efficiency of 5-10 deployments for learning the standard OpenAI Gym MuJoCo benchmarks that no other algorithms can match.
While the final performances are sometimes worse than deployment-unconstrained SAC, we hope our definition and benchmark for deployment efficiency can motivate further research by the community.
Other impact questions are not applicable for this paper.

On the other hand, BREMEN still requires a few online deployments, which may still involve some risks.
Fully safe and efficient RL that can learn from scratch remains an open problem.

\section*{Acknowledgments}
We thank Yusuke Iwasawa, Emma Brunskill, Lihong Li, Sergey Levine, and George Tucker for insightful comments and discussion.

\medskip

\small
\bibliographystyle{plainnat}
\bibliography{arxiv}

\begin{thebibliography}{62}
\providecommand{\natexlab}[1]{#1}
\providecommand{\url}[1]{\texttt{#1}}
\expandafter\ifx\csname urlstyle\endcsname\relax
  \providecommand{\doi}[1]{doi: #1}\else
  \providecommand{\doi}{doi: \begingroup \urlstyle{rm}\Url}\fi

\bibitem[Agarwal et~al.(2019)Agarwal, Schuurmans, and
  Norouzi]{agarwal2019optimistic}
Rishabh Agarwal, Dale Schuurmans, and Mohammad Norouzi.
\newblock An optimistic perspective on offline reinforcement learning.
\newblock \emph{arXiv preprint arXiv:1907.04543}, 2019.

\bibitem[Bai et~al.(2019)Bai, Xie, Jiang, and Wang]{bai2019provably}
Yu~Bai, Tengyang Xie, Nan Jiang, and Yu-Xiang Wang.
\newblock Provably efficient q-learning with low switching cost.
\newblock In \emph{Advances in Neural Information Processing Systems}, 2019.

\bibitem[Barth-Maron et~al.(2018)Barth-Maron, Hoffman, Budden, Dabney, Horgan,
  TB, Muldal, Heess, and Lillicrap]{d4pg}
Gabriel Barth-Maron, Matthew Hoffman, David Budden, Will Dabney, Dan Horgan,
  Dhruva TB, Alistair Muldal, Nicolas Heess, and Timothy Lillicrap.
\newblock Distributed distributional deterministic policy gradients.
\newblock In \emph{International Conference on Learning Representations}, 2018.

\bibitem[Boney et~al.(2019)Boney, Kannala, and Ilin]{boney2019regularizing}
Rinu Boney, Juho Kannala, and Alexander Ilin.
\newblock Regularizing model-based planning with energy-based models.
\newblock In \emph{Conference on Robot Learning}, 2019.

\bibitem[Chow et~al.(2015)Chow, Tamar, Mannor, and Pavone]{chow2015risk}
Yinlam Chow, Aviv Tamar, Shie Mannor, and Marco Pavone.
\newblock Risk-sensitive and robust decision-making: a cvar optimization
  approach.
\newblock In \emph{Advances in Neural Information Processing Systems}, 2015.

\bibitem[Chow et~al.(2018)Chow, Nachum, Duenez-Guzman, and
  Ghavamzadeh]{chow2018lyapunov}
Yinlam Chow, Ofir Nachum, Edgar Duenez-Guzman, and Mohammad Ghavamzadeh.
\newblock A lyapunov-based approach to safe reinforcement learning.
\newblock In \emph{Advances in neural information processing systems}, 2018.

\bibitem[Chow et~al.(2019)Chow, Nachum, Faust, Duenez-Guzman, and
  Ghavamzadeh]{chow2019lyapunov}
Yinlam Chow, Ofir Nachum, Aleksandra Faust, Edgar Duenez-Guzman, and Mohammad
  Ghavamzadeh.
\newblock Lyapunov-based safe policy optimization for continuous control.
\newblock \emph{arXiv preprint arXiv:1901.10031}, 2019.

\bibitem[Chua et~al.(2018)Chua, Calandra, McAllister, and Levine]{chua2018deep}
Kurtland Chua, Roberto Calandra, Rowan McAllister, and Sergey Levine.
\newblock Deep reinforcement learning in a handful of trials using
  probabilistic dynamics models.
\newblock In \emph{Advances in Neural Information Processing Systems}, 2018.

\bibitem[Clavera et~al.(2018)Clavera, Rothfuss, Schulman, Fujita, Asfour, and
  Abbeel]{clavera2018model}
Ignasi Clavera, Jonas Rothfuss, John Schulman, Yasuhiro Fujita, Tamim Asfour,
  and Pieter Abbeel.
\newblock Model-based reinforcement learning via meta-policy optimization.
\newblock In \emph{Conference on Robot Learning}, 2018.

\bibitem[Degris et~al.(2012)Degris, White, and Sutton]{degris2012off}
Thomas Degris, Martha White, and Richard~S Sutton.
\newblock Off-policy actor-critic.
\newblock \emph{arXiv preprint arXiv:1205.4839}, 2012.

\bibitem[Deisenroth and Rasmussen(2011)]{deisenroth2011pilco}
Marc Deisenroth and Carl~E Rasmussen.
\newblock {PILCO}: A model-based and data-efficient approach to policy search.
\newblock In \emph{International Conference on Machine Learning}, 2011.

\bibitem[Ernst et~al.(2005)Ernst, Geurts, and Wehenkel]{ernst2005tree}
Damien Ernst, Pierre Geurts, and Louis Wehenkel.
\newblock Tree-based batch mode reinforcement learning.
\newblock \emph{Journal of Machine Learning Research}, 2005.

\bibitem[Espeholt et~al.(2018)Espeholt, Soyer, Munos, Simonyan, Mnih, Ward,
  Doron, Firoiu, Harley, Dunning, et~al.]{espeholt2018impala}
Lasse Espeholt, Hubert Soyer, Remi Munos, Karen Simonyan, Vlad Mnih, Tom Ward,
  Yotam Doron, Vlad Firoiu, Tim Harley, Iain Dunning, et~al.
\newblock {IMPALA}: Scalable distributed deep-rl with importance weighted
  actor-learner architectures.
\newblock In \emph{International Conference on Machine Learning}, 2018.

\bibitem[Espeholt et~al.(2019)Espeholt, Marinier, Stanczyk, Wang, and
  Michalski]{espeholt2019seed}
Lasse Espeholt, Rapha{\"e}l Marinier, Piotr Stanczyk, Ke~Wang, and Marcin
  Michalski.
\newblock {SEED RL}: Scalable and efficient deep-rl with accelerated central
  inference.
\newblock \emph{arXiv preprint arXiv:1910.06591}, 2019.

\bibitem[Eysenbach et~al.(2018)Eysenbach, Gu, Ibarz, and
  Levine]{eysenbach2017leave}
Benjamin Eysenbach, Shixiang Gu, Julian Ibarz, and Sergey Levine.
\newblock Leave no trace: Learning to reset for safe and autonomous
  reinforcement learning.
\newblock \emph{International Conference on Learning Representations}, 2018.

\bibitem[Fujimoto et~al.(2019)Fujimoto, Meger, and Precup]{Fujimoto:2018td}
Scott Fujimoto, David Meger, and Doina Precup.
\newblock Off-policy deep reinforcement learning without exploration.
\newblock In \emph{International Conference on Machine Learning}, 2019.

\bibitem[Gottesman et~al.(2018)Gottesman, Johansson, Meier, Dent, Lee,
  Srinivasan, Zhang, Ding, Wihl, Peng, Yao, Lage, Mosch, wei H.~Lehman,
  Komorowski, Komorowski, Faisal, Celi, Sontag, and
  Doshi-Velez]{gottesman2018evaluating}
Omer Gottesman, Fredrik Johansson, Joshua Meier, Jack Dent, Donghun Lee,
  Srivatsan Srinivasan, Linying Zhang, Yi~Ding, David Wihl, Xuefeng Peng, Jiayu
  Yao, Isaac Lage, Christopher Mosch, Li~wei H.~Lehman, Matthieu Komorowski,
  Matthieu Komorowski, Aldo Faisal, Leo~Anthony Celi, David Sontag, and Finale
  Doshi-Velez.
\newblock Evaluating reinforcement learning algorithms in observational health
  settings.
\newblock \emph{arXiv preprint arXiv:1805.12298}, 2018.

\bibitem[Gu et~al.(2016)Gu, Lillicrap, Sutskever, and Levine]{gu2016continuous}
Shixiang Gu, Timothy Lillicrap, Ilya Sutskever, and Sergey Levine.
\newblock Continuous deep q-learning with model-based acceleration.
\newblock In \emph{International Conference on Machine Learning}, 2016.

\bibitem[Gu et~al.(2017{\natexlab{a}})Gu, Holly, Lillicrap, and
  Levine]{gu2017deep}
Shixiang Gu, Ethan Holly, Timothy Lillicrap, and Sergey Levine.
\newblock Deep reinforcement learning for robotic manipulation with
  asynchronous off-policy updates.
\newblock In \emph{International Conference on Robotics and Automation},
  2017{\natexlab{a}}.

\bibitem[Gu et~al.(2017{\natexlab{b}})Gu, Lillicrap, Ghahramani, Turner, and
  Levine]{gu2016qprop}
Shixiang Gu, Timothy Lillicrap, Zoubin Ghahramani, Richard~E. Turner, and
  Sergey Levine.
\newblock {Q-Prop}: Sample-efficient policy gradient with an off-policy critic.
\newblock In \emph{International Conference on Learning Representations},
  2017{\natexlab{b}}.

\bibitem[Guo and Brunskill(2015)]{guo2015conc}
Zhaohan Guo and Emma Brunskill.
\newblock Concurrent pac rl.
\newblock In \emph{AAAI Conference on Artificial Intelligence}, 2015.

\bibitem[Haarnoja et~al.(2018)Haarnoja, Zhou, Abbeel, and
  Levine]{Haarnoja:2018uya}
Tuomas Haarnoja, Aurick Zhou, Pieter Abbeel, and Sergey Levine.
\newblock Soft actor-critic: Off-policy maximum entropy deep reinforcement
  learning with a stochastic actor.
\newblock In \emph{International Conference on Machine Learning}, 2018.

\bibitem[Heess et~al.(2015)Heess, Wayne, Silver, Lillicrap, Erez, and
  Tassa]{heess2015learning}
Nicolas Heess, Gregory Wayne, David Silver, Timothy Lillicrap, Tom Erez, and
  Yuval Tassa.
\newblock Learning continuous control policies by stochastic value gradients.
\newblock In \emph{Advances in Neural Information Processing Systems}, 2015.

\bibitem[Hessel et~al.(2018)Hessel, Modayil, Van~Hasselt, Schaul, Ostrovski,
  Dabney, Horgan, Piot, Azar, and Silver]{hessel2018rainbow}
Matteo Hessel, Joseph Modayil, Hado Van~Hasselt, Tom Schaul, Georg Ostrovski,
  Will Dabney, Dan Horgan, Bilal Piot, Mohammad Azar, and David Silver.
\newblock Rainbow: Combining improvements in deep reinforcement learning.
\newblock In \emph{AAAI Conference on Artificial Intelligence}, 2018.

\bibitem[Janner et~al.(2019)Janner, Fu, Zhang, and Levine]{janner2019trust}
Michael Janner, Justin Fu, Marvin Zhang, and Sergey Levine.
\newblock When to trust your model: Model-based policy optimization.
\newblock In \emph{Advances in Neural Information Processing Systems}, 2019.

\bibitem[Jaques et~al.(2019)Jaques, Ghandeharioun, Shen, Ferguson, Lapedriza,
  Jones, Gu, and Picard]{jaques2019way}
Natasha Jaques, Asma Ghandeharioun, Judy~Hanwen Shen, Craig Ferguson, Agata
  Lapedriza, Noah Jones, Shixiang Gu, and Rosalind Picard.
\newblock Way off-policy batch deep reinforcement learning of implicit human
  preferences in dialog.
\newblock \emph{arXiv preprint arXiv:1907.00456}, 2019.

\bibitem[Kalashnikov et~al.(2018)Kalashnikov, Irpan, Pastor, Ibarz, Herzog,
  Jang, Quillen, Holly, Kalakrishnan, Vanhoucke, and
  Levine]{kalashnikov2018qtopt}
Dmitry Kalashnikov, Alex Irpan, Peter Pastor, Julian Ibarz, Alexander Herzog,
  Eric Jang, Deirdre Quillen, Ethan Holly, Mrinal Kalakrishnan, Vincent
  Vanhoucke, and Sergey Levine.
\newblock {QT-Opt}: Scalable deep reinforcement learning for vision-based
  robotic manipulation.
\newblock In \emph{Conference on Robot Learning}, 2018.

\bibitem[Kidambi et~al.(2020)Kidambi, Rajeswaran, Netrapalli, and
  Joachims]{Kidambi2020MOReLM}
Rahul Kidambi, Aravind Rajeswaran, Praneeth Netrapalli, and Thorsten Joachims.
\newblock {MOReL} : Model-based offline reinforcement learning.
\newblock \emph{arXiv preprint arXiv:2005.05951}, 2020.

\bibitem[Kingma and Ba(2014)]{adam}
D.~Kingma and J.~Ba.
\newblock Adam: A method for stochastic optimization.
\newblock In \emph{International Conference on Learning Representations}, 2014.

\bibitem[Kumar et~al.(2019)Kumar, Fu, Soh, Tucker, and
  Levine]{kumar2019stabilizing}
Aviral Kumar, Justin Fu, Matthew Soh, George Tucker, and Sergey Levine.
\newblock Stabilizing off-policy q-learning via bootstrapping error reduction.
\newblock In \emph{Advances in Neural Information Processing Systems}, 2019.

\bibitem[Kurutach et~al.(2018)Kurutach, Clavera, Duan, Tamar, and
  Abbeel]{Kurutach:2018wq}
Thanard Kurutach, Ignasi Clavera, Yan Duan, Aviv Tamar, and Pieter Abbeel.
\newblock {Model-Ensemble Trust-Region Policy Optimization}.
\newblock In \emph{International Conference on Learning Representations}, 2018.

\bibitem[Lange et~al.(2012)Lange, Gabel, and Riedmiller]{lange2012batch}
Sascha Lange, Thomas Gabel, and Martin Riedmiller.
\newblock Batch reinforcement learning.
\newblock In \emph{Reinforcement learning}. Springer, 2012.

\bibitem[Levine et~al.(2020)Levine, Kumar, Tucker, and Fu]{levine2020offline}
Sergey Levine, Aviral Kumar, George Tucker, and Justin Fu.
\newblock Offline reinforcement learning: Tutorial, review, and perspectives on
  open problems.
\newblock \emph{arXiv preprint arXiv:2005.01643}, 2020.

\bibitem[Lillicrap et~al.(2016)Lillicrap, Hunt, Pritzel, Heess, Erez, Tassa,
  Silver, and Wierstra]{Lillicrap2016}
Timothy~P. Lillicrap, Jonathan~J. Hunt, Alexander Pritzel, Nicolas Heess, Tom
  Erez, Yuval Tassa, David Silver, and Daan Wierstra.
\newblock Continuous control with deep reinforcement learning.
\newblock In \emph{International Conference on Learning Representations}, 2016.

\bibitem[Lin(1992)]{lin1992self}
Long-Ji Lin.
\newblock Self-improving reactive agents based on reinforcement learning,
  planning and teaching.
\newblock \emph{Machine learning}, 1992.

\bibitem[Liu et~al.(2019)Liu, Swaminathan, Agarwal, and Brunskill]{liu2019off}
Yao Liu, Adith Swaminathan, Alekh Agarwal, and Emma Brunskill.
\newblock Off-policy policy gradient with state distribution correction.
\newblock \emph{arXiv preprint arXiv:1904.08473}, 2019.

\bibitem[Mandel et~al.(2014)Mandel, Liu, Levine, Brunskill, and
  Popovic]{mandel2014offline}
Travis Mandel, Yun-En Liu, Sergey Levine, Emma Brunskill, and Zoran Popovic.
\newblock Offline policy evaluation across representations with applications to
  educational games.
\newblock In \emph{International Conference on Autonomous Agents and Multiagent
  Systems}, 2014.

\bibitem[Mandlekar et~al.(2019)Mandlekar, Ramos, Boots, Fei-Fei, Garg, and
  Fox]{IRIS}
Ajay Mandlekar, Fabio Ramos, Byron Boots, Li~Fei-Fei, Animesh Garg, and Dieter
  Fox.
\newblock {IRIS}: Implicit reinforcement without interaction at scale for
  learning control from offline robot manipulation data.
\newblock \emph{arXiv preprint arXiv:1911.05321}, 2019.

\bibitem[Mnih et~al.(2015)Mnih, Kavukcuoglu, Silver, Rusu, Veness, Bellemare,
  Graves, Riedmiller, Fidjeland, Ostrovski, et~al.]{mnih2015human}
Volodymyr Mnih, Koray Kavukcuoglu, David Silver, Andrei~A Rusu, Joel Veness,
  Marc~G Bellemare, Alex Graves, Martin Riedmiller, Andreas~K Fidjeland, Georg
  Ostrovski, et~al.
\newblock Human-level control through deep reinforcement learning.
\newblock \emph{Nature}, 2015.

\bibitem[Murphy et~al.(2001)Murphy, van~der Laan, Robins, and
  Group]{murphy2001marginal}
Susan~A Murphy, Mark~J van~der Laan, James~M Robins, and Conduct Problems
  Prevention~Research Group.
\newblock Marginal mean models for dynamic regimes.
\newblock \emph{Journal of the American Statistical Association}, 2001.

\bibitem[Nachum et~al.(2018)Nachum, Gu, Lee, and Levine]{nachum2018data}
Ofir Nachum, Shixiang~Shane Gu, Honglak Lee, and Sergey Levine.
\newblock Data-efficient hierarchical reinforcement learning.
\newblock In \emph{Advances in Neural Information Processing Systems}, 2018.

\bibitem[Nachum et~al.(2019)Nachum, Ahn, Ponte, Gu, and Kumar]{nachum2019multi}
Ofir Nachum, Michael Ahn, Hugo Ponte, Shixiang Gu, and Vikash Kumar.
\newblock Multi-agent manipulation via locomotion using hierarchical sim2real.
\newblock In \emph{Conference on Robot Learning}, 2019.

\bibitem[Nagabandi et~al.(2018)Nagabandi, Kahn, Fearing, and
  Levine]{Nagabandi2018}
Anusha Nagabandi, Gregory Kahn, Ronald~S. Fearing, and Sergey Levine.
\newblock {Neural Network Dynamics for Model-Based Deep Reinforcement Learning
  with Model-Free Fine-Tuning}.
\newblock \emph{International Conference on Robotics and Automation}, 2018.

\bibitem[Nagarajan and Kolter(2019)]{nagarajan2019generalization}
Vaishnavh Nagarajan and J~Zico Kolter.
\newblock Generalization in deep networks: The role of distance from
  initialization.
\newblock \emph{arXiv preprint arXiv:1901.01672}, 2019.

\bibitem[Nair et~al.(2015)Nair, Srinivasan, Blackwell, Alcicek, Fearon,
  De~Maria, Panneershelvam, Suleyman, Beattie, Petersen,
  et~al.]{nair2015massively}
Arun Nair, Praveen Srinivasan, Sam Blackwell, Cagdas Alcicek, Rory Fearon,
  Alessandro De~Maria, Vedavyas Panneershelvam, Mustafa Suleyman, Charles
  Beattie, Stig Petersen, et~al.
\newblock Massively parallel methods for deep reinforcement learning.
\newblock \emph{arXiv preprint arXiv:1507.04296}, 2015.

\bibitem[Precup et~al.(2001)Precup, Sutton, and Dasgupta]{precup2001off}
Doina Precup, Richard~S Sutton, and Sanjoy Dasgupta.
\newblock Off-policy temporal-difference learning with function approximation.
\newblock In \emph{International Conference on Machine Learning}, 2001.

\bibitem[Rajeswaran et~al.(2017)Rajeswaran, Kumar, Gupta, Vezzani, Schulman,
  Todorov, and Levine]{rajeswaran2017learning}
Aravind Rajeswaran, Vikash Kumar, Abhishek Gupta, Giulia Vezzani, John
  Schulman, Emanuel Todorov, and Sergey Levine.
\newblock Learning complex dexterous manipulation with deep reinforcement
  learning and demonstrations.
\newblock \emph{arXiv preprint arXiv:1709.10087}, 2017.

\bibitem[Rajeswaran et~al.(2020)Rajeswaran, Mordatch, and
  Kumar]{rajeswaran2020game}
Aravind Rajeswaran, Igor Mordatch, and Vikash Kumar.
\newblock A game theoretic framework for model based reinforcement learning.
\newblock \emph{arXiv preprint arXiv:2004.07804}, 2020.

\bibitem[Ray et~al.(2019)Ray, Achiam, and Amodei]{raybenchmarking}
Alex Ray, Joshua Achiam, and Dario Amodei.
\newblock Benchmarking safe exploration in deep reinforcement learning.
\newblock \emph{arXiv preprint arXiv:1910.01708}, 2019.

\bibitem[Ross et~al.(2011)Ross, Gordon, and Bagnell]{ross2011reduction}
St{\'e}phane Ross, Geoffrey Gordon, and Drew Bagnell.
\newblock A reduction of imitation learning and structured prediction to
  no-regret online learning.
\newblock In \emph{International conference on artificial intelligence and
  statistics}, 2011.

\bibitem[Roux(2016)]{roux2016efficient}
Nicolas~Le Roux.
\newblock Efficient iterative policy optimization.
\newblock \emph{arXiv preprint arXiv:1612.08967}, 2016.

\bibitem[Schulman et~al.(2015)Schulman, Levine, Moritz, Jordan, and
  Abbeel]{Schulman:2015uk}
John Schulman, Sergey Levine, Philipp Moritz, Michael Jordan, and Pieter
  Abbeel.
\newblock Trust region policy optimization.
\newblock In \emph{International Conference on Machine Learning}, 2015.

\bibitem[Schulman et~al.(2017)Schulman, Wolski, Dhariwal, Radford, and
  Klimov]{Schulman2017PPO}
John Schulman, Filip Wolski, Prafulla Dhariwal, Alec Radford, and Oleg Klimov.
\newblock Proximal policy optimization algorithms.
\newblock \emph{arXiv preprint arXiv:1707.06347}, 2017.

\bibitem[Siegel et~al.(2020)Siegel, Springenberg, Berkenkamp, Abdolmaleki,
  Neunert, Lampe, Hafner, and Riedmiller]{Siegel2020KeepDW}
Noah~Y. Siegel, Jost~Tobias Springenberg, Felix Berkenkamp, Abbas Abdolmaleki,
  Michael Neunert, Thomas Lampe, Roland Hafner, and Martin~A. Riedmiller.
\newblock Keep doing what worked: Behavioral modelling priors for offline
  reinforcement learning.
\newblock In \emph{International Conference on Learning Representations}, 2020.

\bibitem[Singh et~al.(1994)Singh, Jaakkola, and Jordan]{singh1994learning}
Satinder~P Singh, Tommi Jaakkola, and Michael~I Jordan.
\newblock Learning without state-estimation in partially observable markovian
  decision processes.
\newblock In \emph{Machine Learning Proceedings}. Elsevier, 1994.

\bibitem[Singh et~al.(1995)Singh, Jaakkola, and Jordan]{Satinder1994}
Satinder~P. Singh, Tommi Jaakkola, and Michael~I. Jordan.
\newblock Reinforcement learning with soft state aggregation.
\newblock In \emph{Advances in Neural Information Processing Systems}, 1995.

\bibitem[Sohn et~al.(2020)Sohn, Chow, Ooi, Nachum, Lee, Chi, and
  Boutilier]{sohn2020brpo}
Sungryull Sohn, Yinlam Chow, Jayden Ooi, Ofir Nachum, Honglak Lee, Ed~Chi, and
  Craig Boutilier.
\newblock {BRPO}: Batch residual policy optimization.
\newblock \emph{arXiv preprint arXiv:2002.05522}, 2020.

\bibitem[Sutton(1991)]{sutton1991dyna}
Richard~S Sutton.
\newblock Dyna, an integrated architecture for learning, planning, and
  reacting.
\newblock \emph{ACM Sigart Bulletin}, 1991.

\bibitem[Todorov et~al.(2012)Todorov, Erez, and Tassa]{todorov2012mujoco}
Emanuel Todorov, Tom Erez, and Yuval Tassa.
\newblock Mujoco: A physics engine for model-based control.
\newblock In \emph{International Conference on Intelligent Robots and Systems},
  2012.

\bibitem[Wang et~al.(2019)Wang, Bao, Clavera, Hoang, Wen, Langlois, Zhang,
  Zhang, Abbeel, and Ba]{Wang:2019vw}
Tingwu Wang, Xuchan Bao, Ignasi Clavera, Jerrick Hoang, Yeming Wen, Eric
  Langlois, Shunshi Zhang, Guodong Zhang, Pieter Abbeel, and Jimmy Ba.
\newblock Benchmarking model-based reinforcement learning.
\newblock \emph{arXiv preprint arXiv:1907.02057}, 2019.

\bibitem[Wu et~al.(2019)Wu, Tucker, and Nachum]{Wu:2019wl}
Yifan Wu, George Tucker, and Ofir Nachum.
\newblock {Behavior Regularized Offline Reinforcement Learning}.
\newblock \emph{arXiv preprint arXiv:1911.11361}, 2019.

\bibitem[Yu et~al.(2020)Yu, Thomas, Yu, Ermon, Zou, Levine, Finn, and
  Ma]{yu2020mopo}
Tianhe Yu, Garrett Thomas, Lantao Yu, Stefano Ermon, James Zou, Sergey Levine,
  Chelsea Finn, and Tengyu Ma.
\newblock {MOPO}: Model-based offline policy optimization.
\newblock \emph{arXiv preprint arXiv:2005.13239}, 2020.

\end{thebibliography}

\clearpage
\normalsize
\section*{Appendix}
\setcounter{section}{0}
\renewcommand{\thesection}{\Alph{section}}

\section{Proof of Proposition~\ref{prop:policy_model_error}}
\label{sec:proof_of_propotion}
We first consider $\epsilon_\pi$.
The behavior cloning objective in its supremum form is,
\begin{eqnarray}
\epsilon_\beta &=& \sup_{s\in\mathcal{D}} \E_{a\sim\mathcal{D}(-|s)} [\left\|a_{t}-\hat{\pi}_{\beta}\left(s_{t}\right)\right\|_{2}^{2} / 2]  - \mathcal{H}(\pi_b(-|s)) \nonumber \\
  &=&
  \sup_{s\in\mathcal{D}} \E_{a\sim\mathcal{D}(-|s)} \left[-\log \pi_{\theta_0}(a|s) \right]  - \mathcal{H}(\pi_b(-|s)) -\frac{1}{2}\log 2\pi 
  \label{eq:bc_lsm} \nonumber\\
  &=& \sup_{s\in\mathcal{D}} D_{KL}(\pi_{b}(-|s)||\pi_{\theta_0}(-|s)) -\frac{1}{2}\log 2\pi. \nonumber
  \label{eq:policy_mle_kl}
\end{eqnarray}
We apply Pinsker's inequality to the true and estimated behavior policy to yield
\begin{equation}
\sup_{s} D_{TV}(\pi_{b}(-|s)||\pi_{\theta_0}(-|s)) \leq \sqrt{\frac{1}{2}\epsilon_\beta +\frac{1}{4}\log 2\pi}. \nonumber
\end{equation}

By the same Pinsker's inequality, we have,
\begin{equation}
\sup_{s} D_{TV}(\pi_{\theta_k}(-|s)||\pi_{\theta_{k+1}}(-|s)) \leq \sqrt{\delta/2}. \nonumber
\end{equation}
Therefore, by triangle inequality, we have
\begin{equation}
    \epsilon_\pi \le \sup_s D_{TV}(\pi_{b}(-|s)||\pi_{\theta_{T}}(-|s)) \le \sqrt{\frac{1}{2}\epsilon_\beta +\frac{1}{4}\log 2\pi} + T\sqrt{\frac{1}{2}\delta}, \nonumber
\end{equation}
as desired.

We perform similarly for $\epsilon_m$. The model dynamics loss is
\begin{eqnarray}
\epsilon_\phi &=& \sup_{s,a} \E_{s'\sim\mathcal{D}(-|s,a)} \left[\|s' - \hat{f}_\phi(s,a)\|_2^2 / 2 \right] - \mathcal{H}(p(-|s,a)) \nonumber\\
&=& \sup_{s,a} \E_{s'\sim\mathcal{D}(-|s,a)} \left[\log p_{\phi}(s'|s,a) \right]  - \mathcal{H}(p(-|s,a)) - \frac{1}{2}\log 2\pi \nonumber\\
&=& \sup_{s,a} D_{KL}(p(-|s,a)||p_{\phi}(-|s,a)) - \frac{1}{2}\log 2\pi. \nonumber
\end{eqnarray}
We apply Pinsker's inequality to the true dynamics and learned model to yield
\begin{equation}
\epsilon_{m} \le \sup_{s,a} D_{TV} (p(-|s,a)||p_{\phi}(-|s,a)) \leq \sqrt{\frac{1}{2}\epsilon_\phi +\frac{1}{4}\log 2\pi}, \nonumber
\end{equation}
as desired.

\section{Details of Experimental Settings}
\label{sec:experimental_appendix}
\subsection{Implementation Details}
For our baseline methods, we use the open-source implementations of SAC, BC, BCQ, and BRAC published in~\citet{Wu:2019wl}.
SAC and BRAC have (300, 300) Q-Network and (200, 200) policy network.
BC has (200, 200) policy network, and BCQ has (300, 300) Q-Network, (300, 300) policy network, and (750, 750) conditional VAE.
As for online ME-TRPO, we utilize the codebase of model-based RL benchmark~\cite{Wang:2019vw}. 
BREMEN and online ME-TRPO use the policy consisting of two hidden layers with 200 units.
The dynamics model also consists of two hidden layers with 1,024 units.
We use Adam~\cite{adam} as the optimizer with the learning rate of 0.001 for the dynamics model, and 0.0005 for behavior cloning in BREMEN.
Especially in BREMEN and online ME-TRPO, we adopt a linear feature value function to stabilize the training.
BREMEN in deployment-efficient settings takes about two or three hours per deployment on an NVIDIA TITAN V.

To leverage neural networks as Dyna-style~\cite{sutton1991dyna} dynamics models, we modify reward and termination function so that they are not dependent on the internal physics engine for calculation, following model-based benchmark codebase~\cite{Wang:2019vw}; see Table~\ref{tab:reward_func}.
Note that the score of baselines~(e.g., BCQ, BRAC) is slightly different from~\citet{Wu:2019wl} due to this modification of the reward function.
We re-run each algorithm in our environments and got appropriate convergence.

The maximum length of one episode is 1,000 steps without any termination in Ant and HalfCheetah; however, termination function is enabled in Hopper and Walker2d.
The batch size of transitions for policy update is 50,000 in BREMEN and ME-TRPO, following~\citet{Kurutach:2018wq}.
The batch size of BC and BRAC is 256, and BCQ is 100, also following~\citet{Wu:2019wl}.

\begin{figure}[ht]
    \centering
    \subfloat[Ant]{\includegraphics[width=0.2\linewidth]{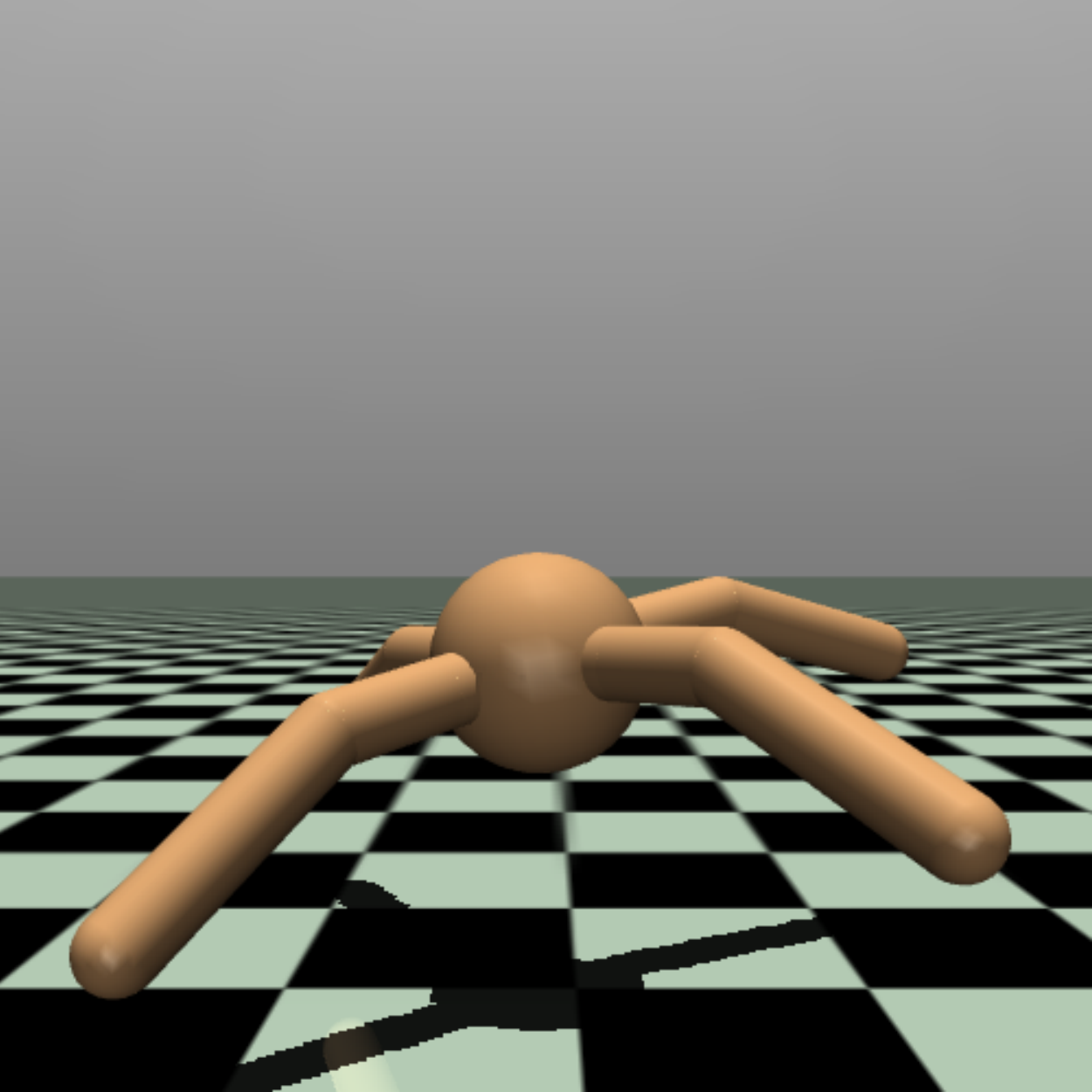}}\,
    \subfloat[HalfCheetah]{\includegraphics[width=0.2\linewidth]{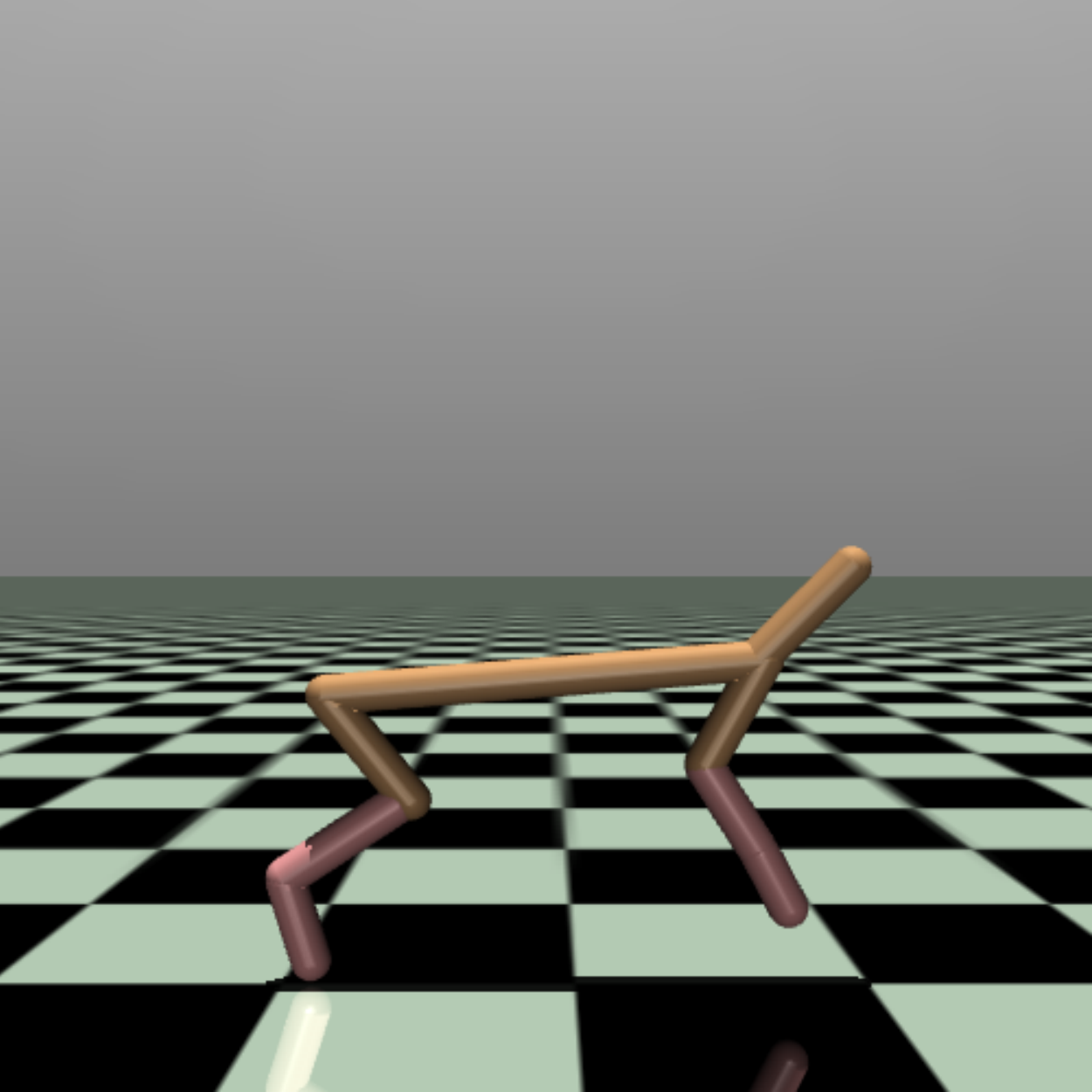}}\,
    \subfloat[Hopper]{\includegraphics[width=0.2\linewidth]{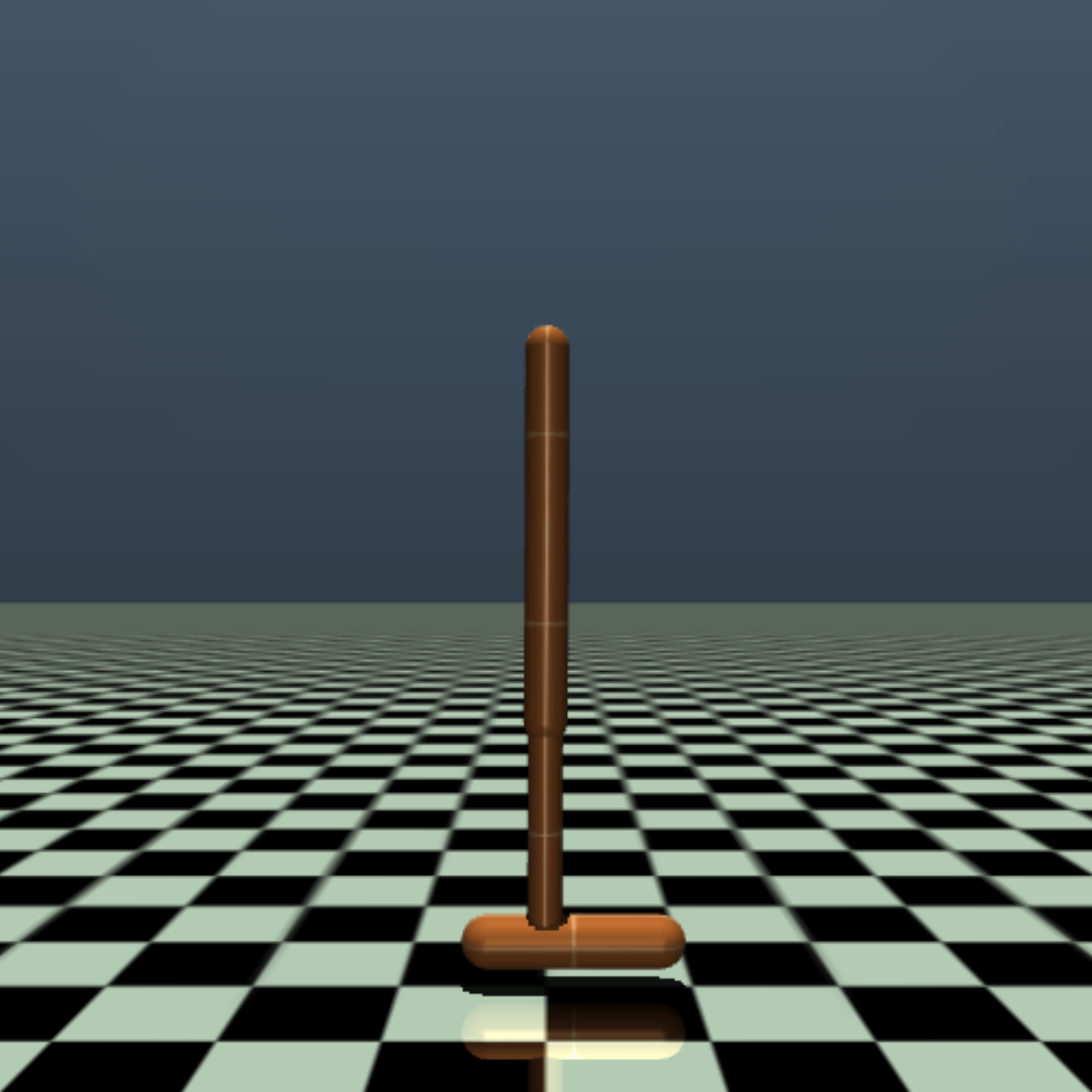}}\,
    \subfloat[Walker2d]{\includegraphics[width=0.2\linewidth]{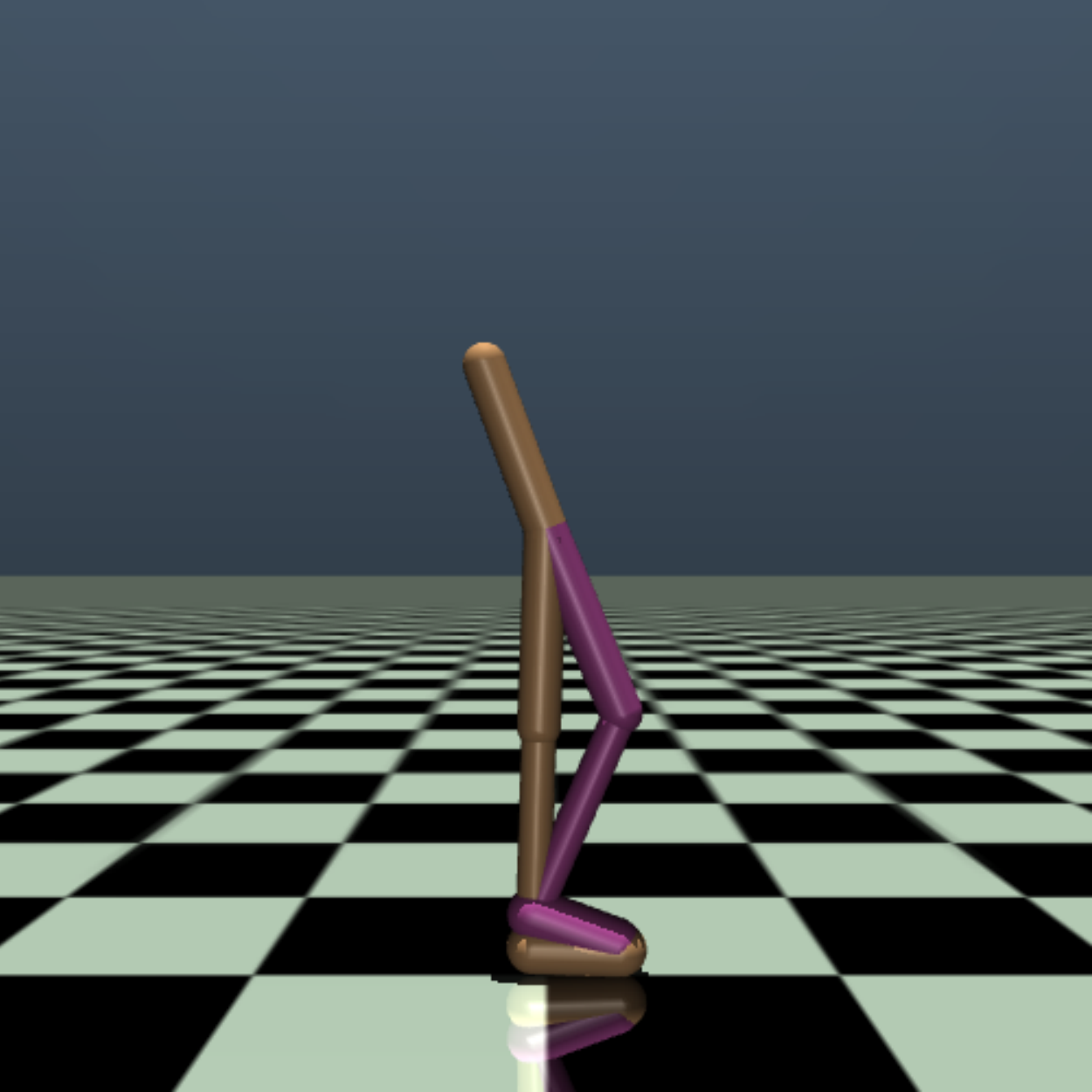}}
    \caption{Four standard MuJoCo benchmark environments used in our experiments.}
    \label{fig:mujoco}
\end{figure}

\begin{table}[ht]
    \centering
    \begin{tabular}{c|c|c}
        \bhline{1.1pt}
        Environment & Reward function & Termination in rollouts \\
        \hline
        Ant &$\dot{x}_t - 0.1\|\bm{a}_t\|_{2}^2 -3.0 \times (z_t - 0.57)^2 + 1$& False\\
        HalfCheetah &$\dot{x}_t - 0.1\|\bm{a}_t\|_{2}^2$ & False\\
        Hopper & $\dot{x}_t - 0.001\|\bm{a}_t\|_{2}^2 + 1$  & True\\
        Walker2d & $\dot{x}_t - 0.001\|\bm{a}_t\|_{2}^2 + 1$ & True\\
        \bhline{1.1pt}
    \end{tabular}
    \caption{Reward function and termination in rollouts in the experiments. We remove all contact information from observation of Ant, basically following~\citet{Wang:2019vw}.}
    \label{tab:reward_func}
\end{table}

\subsection{Hyper Parameters}
In this section, we describe the hyper-parameters in both deployment-efficient RL~(Section~\ref{sec:hyper_deploy_rl}) and offline RL~(Section~\ref{sec:hyper_offline}) settings.
We run all of our experiments with five random seed, and the results are averaged.

\subsubsection{Deployment-Efficient RL}
\label{sec:hyper_deploy_rl}
Table~\ref{tab:deploy_hyper} shows the hyper-parameters of BREMEN.
The rollout length is searched from \{250, 500, 1000\}, and max step size $\delta$ is searched from \{0.001, 0.01, 0.05, 0.1, 1.0\}.
As for the discount factor~$\gamma$ and GAE $\lambda$, we follow~\citet{Wang:2019vw}.

\begin{table}[ht]
    \centering
    \small
    \begin{tabular}{l|c|c|c|c}
    \bhline{1.1pt}
        Parameter & Ant & HalfCheetah & Hopper & Walker2d\\
    \hline
        Iteration per batch & 2,000 & 2,000 & 6,000 & 2,000 \\
        Deployment & 5 & 5 & 10 & 10 \\
        Total iteration & 10,000 & 10,000 & 60,000 & 20,000 \\
        Rollouts length & 250 & 250 & 1,000 & 1,000 \\
        Max step size $\delta$ & 0.05 & 0.1 & 0.05 & 0.05 \\
        Discount factor~$\gamma$ & 0.99 & 0.99 & 0.99 & 0.99\\
        GAE $\lambda$ & 0.97 & 0.95 & 0.95 & 0.95 \\
        Stationary noise $\sigma$ & 0.1 & 0.1 & 0.1 & 0.1 \\
    \bhline{1.1pt}
    \end{tabular}
    \caption{Hyper-parameters of BREMEN in deployment-efficient settings.}
    \label{tab:deploy_hyper}
\end{table}

\paragraph{Number of Iterations for Policy Optimization}
To achieve high deployment efficiency, the number of iterations for policy optimization between deployments is one of the important hyper-parameters for fast convergence.
In the existing methods~(BCQ, BRAC, SAC), we search over three values: \{10,000, 50,000, 100,000\}, and choose 10,000 in BCQ and BRAC, and 100,000 in SAC~(Figure~\ref{fig:iter_search_sac}).
For BREMEN, we also search over three values: \{2,000, 4,000, 6,000\}.
Figure~\ref{fig:iter_search_bremen} shows the results of iteration search, and we choose 2,000 in Ant, HalfCheetah, and Walker2d, and 6,000 in Hopper.

\begin{figure}[ht]
    \centering
    \includegraphics[width=\linewidth]{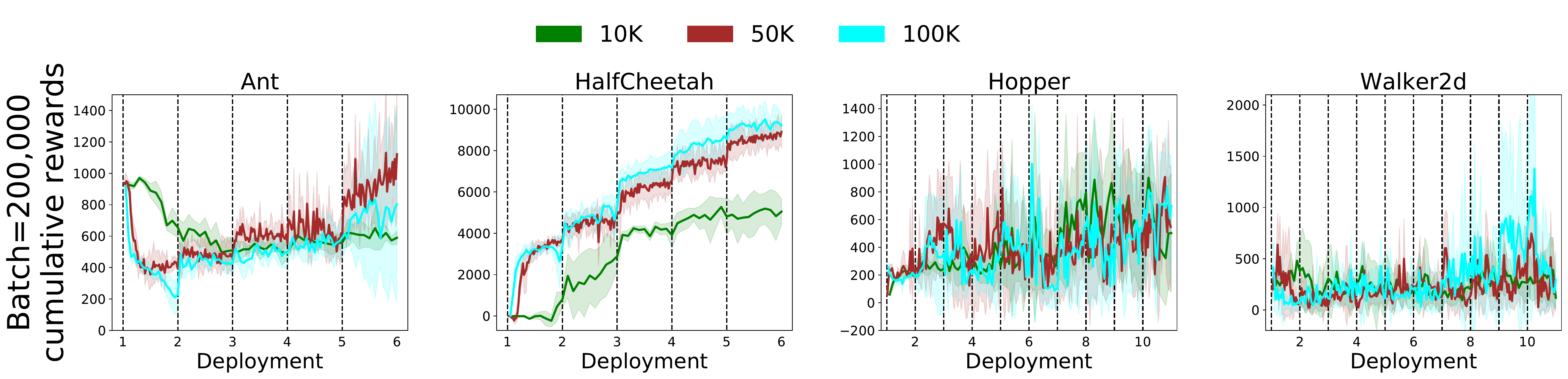}
    \caption{Search on the number of iterations for SAC policy optimization between deployments. The number of transitions per one data-collection is 200K.\label{fig:iter_search_sac}}
\end{figure}

\begin{figure}[ht]
    \centering
    \includegraphics[width=\linewidth]{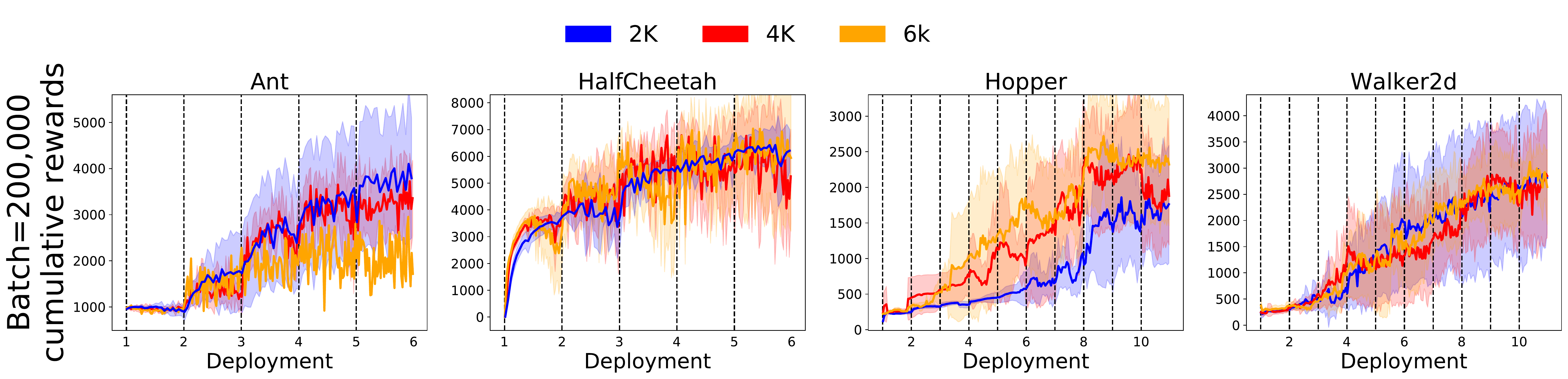}
    \caption{Search on the number of iterations for BREMEN policy optimization between deployments. The number of transitions per one data-collection is 200K.\label{fig:iter_search_bremen}}
\end{figure}

\paragraph{Stationary Noise in BREMEN}
To achieve effective exploration, the stochastic Gaussian policy is a good choice.
We found that adding stationary Gaussian noise to the policy in the imaginary trajectories and data collection led to the notable improvement.
Stationary Gaussian policy is written as,
\begin{equation}
    \label{eq:stationary_policy}
    a_t = \tanh(\mu_{\theta}(s_t)) + \epsilon, \qquad \epsilon \sim \mathcal{N}(0,\sigma^2). \nonumber
\end{equation}
Another choice is a learned Gaussian policy, which parameterizes not only $\mu_{\theta}$ but also $\sigma_{\theta}$.
Learned gaussian policy is also written as,
\begin{equation}
    \label{eq:learned_policy}
    a_t = \tanh(\mu_{\theta}(s_t)) + \sigma_{\theta}(s_t) \odot \epsilon, \qquad \epsilon \sim \mathcal{N}(0,\sigma^2). \nonumber
\end{equation}
We utilize the zero-mean Gaussian $\mathcal{N}(0,\sigma^2)$, and tune up $\sigma$ in Figure~\ref{fig:search_stationary} with HalfCheetah, comparing stationary and learned strategies.
From this experiment, we found that the stationary noise, the scale of 0.1, consistently performs well, and therefore we used it for all our experiments.

\begin{figure}[ht]
    \centering
    \includegraphics[width=10cm]{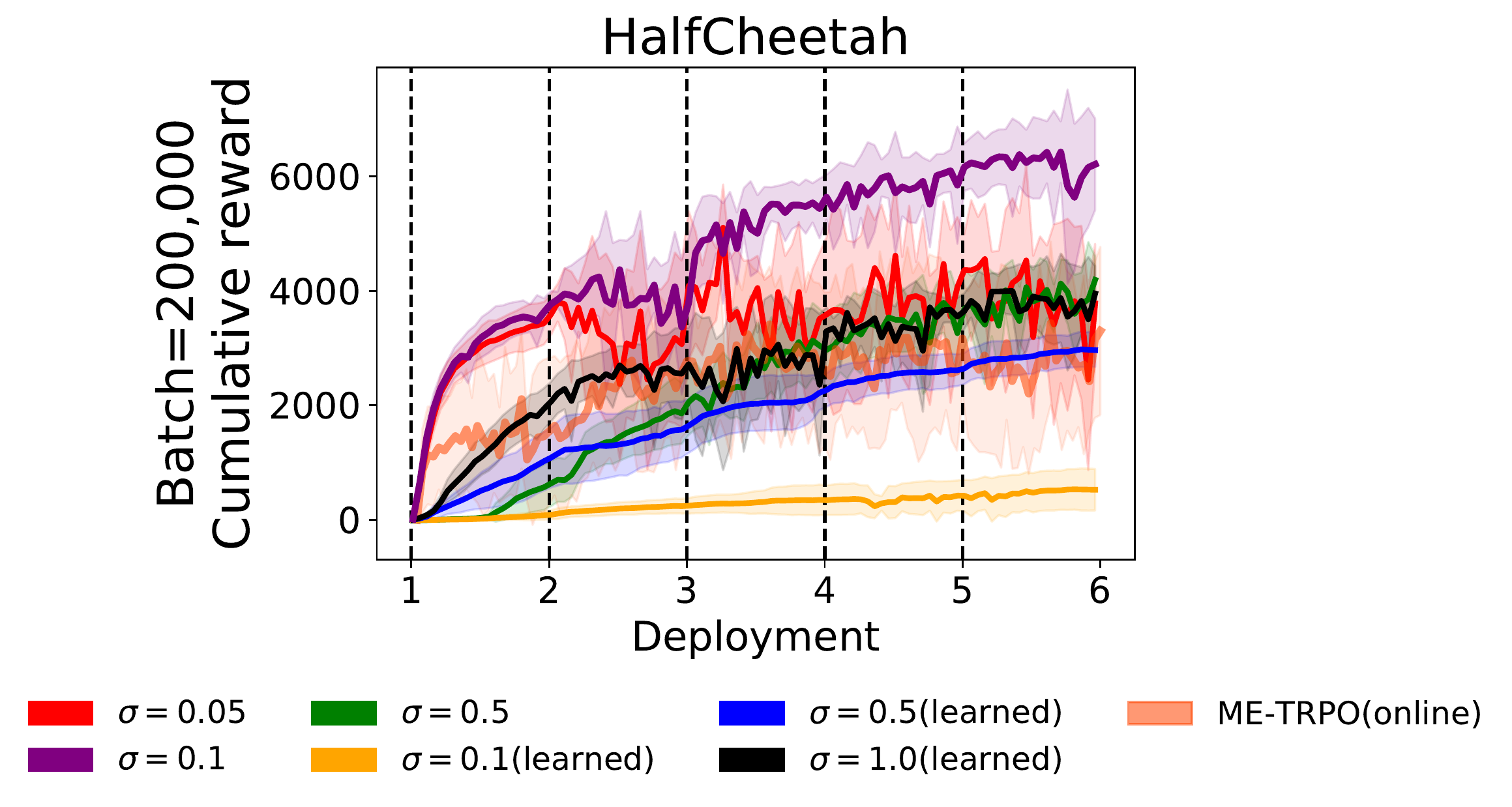}
    \caption{Search on the Gaussian noise parameter $\sigma$ in HalfCheetah. The number of transitions per one data-collection is 200K.\label{fig:search_stationary}}
\end{figure}

\paragraph{Other Hyper-parameters in the Existing Methods}
As for online ME-TRPO, we collect 3,000 steps through online interaction with the environment per 25 iterations and split these transitions into a 2-to-1 ratio of training and validation dataset for learning dynamics models.
In batch size 100,000 settings, we collect 2,000 steps and split with a 1-to-1 ratio.
Totally, we iterate 12,500 times policy optimization, which is equivalent to 500 deployments of the policy.
Note that we carefully tune up the hyper-parameters of online ME-TRPO, and then its performance was improved from \citet{Wang:2019vw}. 

Table~\ref{tab:bcq_hyper} and Table~\ref{tab:brac_hyper} shows the tunable hyper-parameters of BCQ and BRAC, respectively. We refer \citet{Wu:2019wl} to choose these values.
In this work, BRAC applies a primal form of KL value penalty, and BRAC (max Q) means sampling multiple actions and taking the maximum according to the learned Q function.

\begin{table}[ht]
    \centering
    \small
    \begin{tabular}{l|c|c|c|c}
    \bhline{1.1pt}
        Parameter & Ant & HalfCheetah & Hopper & Walker2d\\
    \hline
        Policy learning rate &3e-05 &3e-04 &3e-06 &3e-05 \\
        Perturbation range $\Phi$ &0.15 &0.5 &0.15 &0.15 \\
    \bhline{1.1pt}
    \end{tabular}
    \caption{Hyper-parameters of BCQ.}
    \label{tab:bcq_hyper}
\end{table}

\begin{table}[ht]
    \centering
    \small
    \begin{tabular}{l|c|c|c|c}
    \bhline{1.1pt}
        Parameter & Ant & HalfCheetah & Hopper & Walker2d\\
    \hline
        Policy learning rate &1e-4 &1e-3 &3e-5 &1e-5 \\
        Divergence penalty $\alpha$ &0.3 &0.1 &0.3 &0.3 \\
    \bhline{1.1pt}
    \end{tabular}
    \caption{Hyper-parameters of BRAC.}
    \label{tab:brac_hyper}
\end{table}

\subsubsection{Offline RL}
\label{sec:hyper_offline}
In the offline experiments, we apply the same hyper-parameters as in the deployment-efficient settings described above, except for the iteration per batch.
Algorithm~\ref{alg:bremen} is pseudocode for BREMEN in offline RL settings where policies are updated only with one fixed batch dataset.
The number of iteration $T$ is set to 6,250 in BREMEN, and 500,000 in BC, BCQ, and BRAC.

\begin{algorithm}[ht]
    \small
    \caption{BREMEN for Offline RL}
    \label{alg:bremen}    
    \renewcommand{\algorithmicrequire}{\textbf{Input:}}
    \begin{algorithmic}[1]          
        \REQUIRE Offline dataset $\mathcal{D}=\{s_t,a_t,r_t,s_{t+1}\}$, Initial parameters $\phi = \{\phi_1,\cdots,\phi_K\}$, $\beta$, Number of policy optimization $T$.
        \STATE Train $K$ dynamics models $\hat{f}_{\phi}$ using $\mathcal{D}$  via Eq.~\ref{eq:model-obj}.
        \STATE Train estimated behavior policy $\hat{\pi}_{\beta}$ using $\mathcal{D}$ by behavior cloning via Eq.~\ref{eq:bc-obj}.
        \STATE Initialize target policy $\pi_{\theta_0}=\mathrm{Normal}(\hat{\pi}_{\beta}, 1)$.
        \FOR{policy optimization $k=1,\cdots,T$}
            \STATE Generate imaginary rollout.
            \STATE Optimize target policy $\pi_{\theta}$ satisfying Eq.~\ref{eq:trpo-obj} with the rollout.
        \ENDFOR
    \end{algorithmic}
\end{algorithm}

\section{Additional Experimental Results}
\label{sec:additional_results}
\subsection{Performance on the Dataset with Different Noise}
\label{sec:noise_offline}
Following~\citet{Wu:2019wl} and \citet{Kidambi2020MOReLM}, we additionally compare BREMEN in offline settings to the other baselines~(BC, BCQ, BRAC) with five datasets of different exploration noise.
Each dataset has also one million transitions.
\begin{itemize}
  \item \textbf{eps1}: 40 \% of the dataset is collected by data-collection policy~(partially trained SAC policy) $\pi_b$, 40 \% of the dataset is collected by epsilon greedy policy with $\epsilon=0.1$ to take a random action, and 20 \% of dataset is collected by an uniformly random policy.
  \item \textbf{eps3}: Same as eps1, 40 \% of the dataset is collected by $\pi_b$, 40 \% is collected by epsilon greedy policy with $\epsilon=0.3$, and 20 \% is collected by an uniformly random policy.
  \item \textbf{gaussian1}: 40 \% of the dataset is collected by data-collection policy $\pi_b$, 40 \% is collected by the policy with adding zero-mean Gaussian noise $\mathcal{N}(0,0.1^2)$ to each action sampled from $\pi_b$, and 20 \% is collected by an uniformly random policy.
  \item \textbf{gaussian3}: 40 \% of the dataset is collected by data-collection policy $\pi_b$, 40 \% is collected by the policy with zero-mean Gaussian noise $\mathcal{N}(0,0.3^2)$, and 20 \% is collected by an uniformly random policy.
  \item \textbf{random}: All of the dataset is collected by an uniformly random policy.
\end{itemize}

Table~\ref{table:offline_rl_noise} shows that BREMEN can also achieve performance competitive with state-of-the-art model-free offline RL algorithm even with noisy datasets.
The training curves of each experiment are shown in Section~\ref{sec:train_curve}.

\begin{table}[t]
  \caption{Comparison of BREMEN to the existing offline methods in offline settings, namely, BC, BCQ~\cite{Fujimoto:2018td}, and BRAC~\cite{Wu:2019wl}. Each cell shows the average cumulative reward and their standard deviation with 5 seeds. The maximum steps per episode is 1,000.
  Five different types of exploration noise are introduced during the data collection, eps1, eps3, gaussian1, gaussian3, and random. BRAC applies a primal form of KL value penalty, and BRAC (max Q) means sampling multiple actions and taking the maximum according to the learned Q function.}
  \label{table:offline_rl_noise}
  \small
  \centering
    \begin{tabular}{l|c|c|c|c}
    \bhline{1.1pt}
    \multicolumn{5}{c}{\textbf{Noise: eps1, 1,000,000~(1M) transitions}}\\
    \hline
    Method &Ant &HalfCheetah &Hopper &Walker2d \\
    \hline
    Dataset &1077 &2936 &791 &815 \\
    BC &1381$\pm$71 & 3788$\pm$740 & 266$\pm$486 & 1185$\pm$155 \\
    BCQ & 1937$\pm$116 & 6046$\pm$276 & 800$\pm$659 & 479$\pm$537 \\
    BRAC & 2693$\pm$155 & 7003$\pm$118 & 1243$\pm$162 & 3204$\pm$103 \\
    BRAC~(max Q) & 2907$\pm$98 & 7070$\pm$81 & 1488$\pm$386 & \textbf{3330$\pm$147} \\
    BREMEN~(Ours) & \textbf{3519$\pm$129} & \textbf{7585$\pm$425} & \textbf{2818$\pm$76} & 1710$\pm$429 \\
    ME-TRPO~(offline) & 1514$\pm$503 & 1009$\pm$731 & 1301$\pm$654 & 128$\pm$153 \\
    \bhline{1.1pt}
    \multicolumn{5}{c}{\textbf{Noise: eps3, 1,000,000~(1M) transitions}}\\
    \hline
    Method &Ant &HalfCheetah &Hopper &Walker2d \\
    \hline
    Dataset & 936 & 2408 &662 &648 \\
    BC & 1364$\pm$121 & 2877$\pm$797 & 519$\pm$532 & 1066$\pm$176 \\
    BCQ & 1938$\pm$21 & 5739$\pm$188 & 1170$\pm$446 & 1018$\pm$1231 \\
    BRAC & 2718$\pm$90 & 6434$\pm$147 & 1224$\pm$71 & 2921$\pm$101 \\
    BRAC~(max Q) & 2913$\pm$87 & 6672$\pm$136 & 2103$\pm$746 &\textbf{3079$\pm$110} \\
    BREMEN~(Ours) & \textbf{3409$\pm$218} & \textbf{7632$\pm$104} & \textbf{2803$\pm$65} & 1586$\pm$139 \\
    ME-TRPO~(offline) & 1843$\pm$674 & 5504$\pm$67 & 1308$\pm$756 & 354$\pm$329 \\
    \bhline{1.1pt}
    \multicolumn{5}{c}{\textbf{Noise: gaussian1, 1,000,000~(1M) transitions}}\\
    \hline
    Method &Ant &HalfCheetah &Hopper &Walker2d \\
    \hline
    Dataset &1072 &3150 &882 &1070 \\
    BC &1279$\pm$80 & 4142$\pm$189 & 31$\pm$16 & 1137$\pm$477 \\
    BCQ & 1958$\pm$76 & 5854$\pm$498 & 475$\pm$416 & 608$\pm$416 \\
    BRAC & 2905$\pm$81 & 7026$\pm$168 & 1456$\pm$161 & 3030$\pm$103 \\
    BRAC~(max Q) & 2910$\pm$157 & 7026$\pm$168 & 1575$\pm$89 &\textbf{3242$\pm$97} \\
    BREMEN~(Ours) & \textbf{2912$\pm$165} & \textbf{7928$\pm$313} & \textbf{1999$\pm$617} & 1402$\pm$290 \\
    ME-TRPO~(offline) & 1275$\pm$656 & 1275$\pm$656 & 909$\pm$631 & 171$\pm$119 \\
    \bhline{1.1pt}
    \multicolumn{5}{c}{\textbf{Noise: gaussian3, 1,000,000~(1M) transitions}}\\
    \hline
    Method &Ant &HalfCheetah &Hopper &Walker2d \\
    \hline
    Dataset &1058 &2872 &781 &981 \\
    BC &1300$\pm$34 & 4190$\pm$69 & 611$\pm$467 & 1217$\pm$361 \\
    BCQ &1982$\pm$97 & 5781$\pm$543 & 1137$\pm$582 & 258$\pm$286 \\
    BRAC &3084$\pm$180 & 3933$\pm$2740 & 1432$\pm$499 & 3253$\pm$118 \\
    BRAC~(max Q) &2916$\pm$99 & 3997$\pm$2761 & 1417$\pm$267 & \textbf{3372$\pm$153} \\
    BREMEN~(Ours) &\textbf{3432$\pm$185} & \textbf{8124$\pm$145} & \textbf{1867$\pm$354} & 2299$\pm$474 \\
    ME-TRPO~(offline) &1237$\pm$310 & 2141$\pm$872 & 973$\pm$243 & 219$\pm$145 \\
    \bhline{1.1pt}
    \multicolumn{5}{c}{\textbf{Noise: random, 1,000,000~(1M) transitions}}\\
    \hline
    Method &Ant &HalfCheetah &Hopper &Walker2d \\
    \hline
    Dataset &470 &-285 &34 &2 \\
    BC & 989$\pm$10 & -2$\pm$1 & 106$\pm$62 & 108$\pm$110 \\
    BCQ & 1222$\pm$114 & 2887$\pm$242 & 206$\pm$7 & 228$\pm$12 \\
    BRAC & 1057$\pm$92 & 3449$\pm$259 & 227$\pm$30 & 29$\pm$54 \\
    BRAC~(max Q) & 683$\pm$57 & 3418$\pm$171 & 224$\pm$37 & 26$\pm$50 \\
    BREMEN~(Ours) & 905$\pm$11 & \textbf{3627$\pm$193} & 270$\pm$68 & 254$\pm$6 \\
    ME-TRPO~(offline) & \textbf{2221$\pm$665} & 2701$\pm$120 & \textbf{321$\pm$29} & \textbf{262$\pm$13} \\
    \bhline{1.1pt}
  \end{tabular}
\end{table}

\subsection{Comparison among Different Number of Ensembles}
To deal with the distribution shift during policy optimization, also known as model bias, we introduce the dynamics model ensembles.
We validate the performance of BREMEN with a different number of dynamics models $K$.
Figure~\ref{fig:ensemble_deploy} and Figure~\ref{fig:ensemble_offline} show the performance of BREMEN with the different number of ensembles in deployment-efficient and offline settings.
Ensembles with more dynamics models resulted in better performance due to the mitigation of distributional shift except for $K=10$, and then we choose $K=5$.

\begin{figure}[ht]
    \centering
    \includegraphics[width=\linewidth]{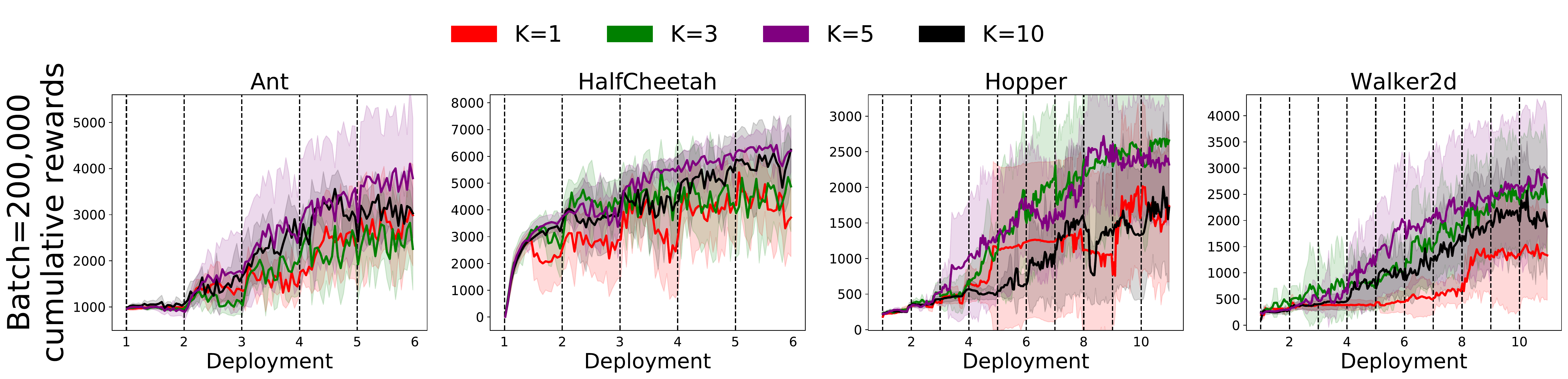}
    \caption{Comparison of the number of dynamics models in deployment-efficient settings.\label{fig:ensemble_deploy}}
\end{figure}

\begin{figure}[ht]
    \centering
    \includegraphics[width=\linewidth]{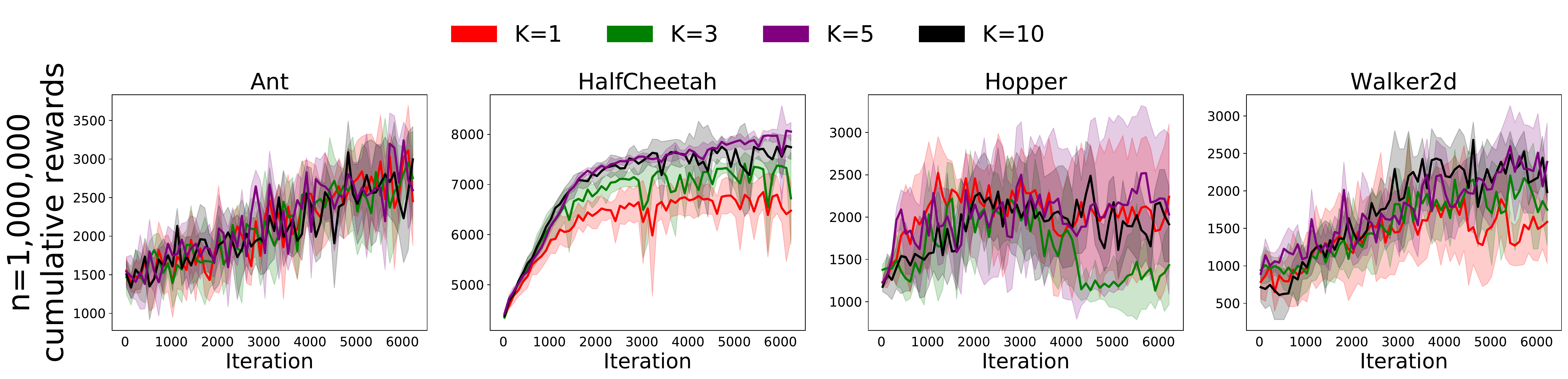}
    \caption{Comparison of the number of dynamics models in offline settings.\label{fig:ensemble_offline}}
\end{figure}

\subsection{Implicit KL Control in Offline Settings}
Similar to Section~\ref{sec:implicit-kl}, we present offline RL experiments to better understand the effect of implicit KL regularization.
In contrast to the implicit KL regularization with Eq.~\ref{eq:trpo-obj}, the optimization of BREMEN with explicit KL value penalty becomes
\begin{align}
    \label{eq:explicit-kl-obj}
    \theta_{k+1} &= \argmax_{\theta} \underset{s, a\sim \pi_{\theta_{k}}, \hat{f}_{\phi_i}}{\operatorname{E}}\left[\frac{\pi_{\theta}(a | s)}{\pi_{\theta_{k}}(a | s)} \left(A^{\pi_{\theta_{k}}}(s, a) - \alpha \KL(\pi_{\theta}(\cdot | s) \| \hat{\pi}_{\beta}(\cdot | s)) \right) \right]\\ \nonumber
    &\text{s.t.} \quad \underset{s \sim \pi_{\theta_k}}{\E}\left[\KL\left(\pi_{\theta}(\cdot | s) \| \pi_{\theta_{k}}(\cdot | s)\right)\right] \leq \delta, \nonumber
\end{align}
where $A^{\pi_{\theta_{k}}}(s, a)$ is the advantage of $\pi_{\theta_k}$ computed using imaginary rollouts with the learned dynamics model and $\delta$ is the maximum step size. Note that BREMEN with explicit KL penalty does not utilize behavior cloning initialization.

We empirically conclude that the explicit constraint $ - \alpha \KL(\pi_{\theta}(\cdot | s) \| \hat{\pi}_{\beta}(\cdot | s))$ is unnecessary and just TRPO update with behavior-initialization as implicit regularization is sufficient in BREMEN algorithm.
Figure~\ref{fig:kl_model_offline} shows the KL divergence between learned policies and the last deployed policies~(top row) and model errors measured by a mean squared error of predicted next state from the true state~(second row).
We find that behavior initialized policy with conservative KL trust-region updates well stuck to the last deployed policy during improvement without explicit KL penalty.
The policy initialized with behavior cloning also tended to suppress the increase of model error, which implies that behavior initialization alleviates the effect of the distribution shift.
In Walker2d, the model error of BREMEN is relatively large, which may relate to the poor performance with noisy datasets in Section~\ref{sec:noise_offline}.

\begin{figure}[ht]
    \centering
    \includegraphics[width=\linewidth]{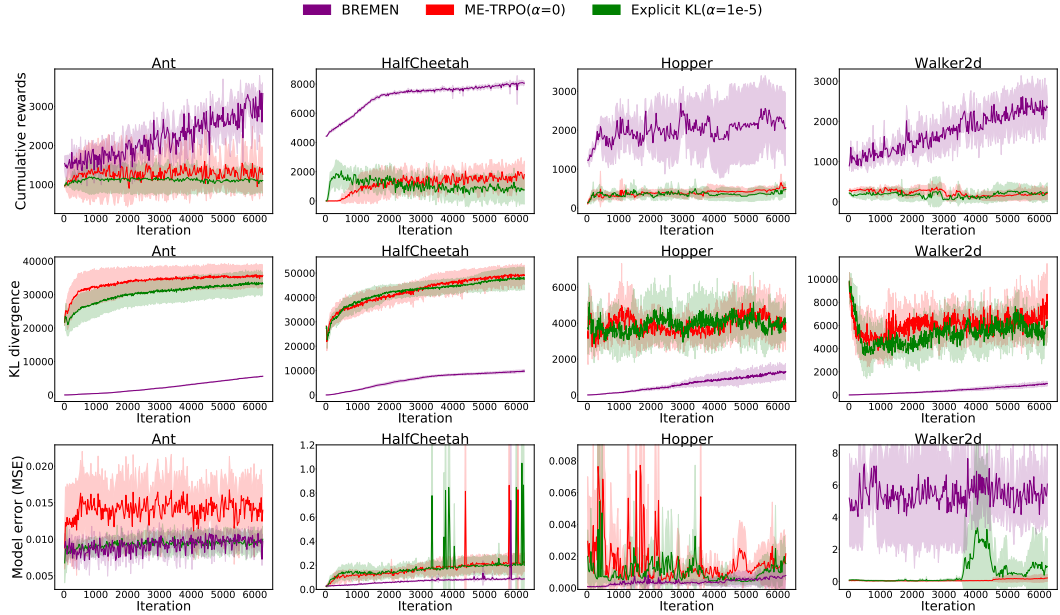}
    \caption{Average cumulative rewards~(top row) and corresponding KL divergence of learned policies from the last deployed policy~(second row) and model errors~(bottom row) in offline settings with 1M dataset~(no noise). Behavior initialized policy~(purple line) tends to suppress the policy and model error during training better than no-initialization~(red line) or explicit KL penalty~(green line).\label{fig:kl_model_offline}}
\end{figure}

\subsection{Training Curves for Offline RL with Different Noises}
\label{sec:train_curve}
In this section, we present training curves of our all experiments in offline settings. Figure~\ref{fig:offline_rl} shows the results in Section~\ref{sec:exp_off}. Figure~\ref{fig:offline_eps1},~\ref{fig:offline_eps3},~\ref{fig:offline_gaussian1},~\ref{fig:offline_gaussian3}, and \ref{fig:offline_random} also show the results in Section~\ref{sec:noise_offline}.

\begin{figure}[ht]
    \centering
    \includegraphics[width=\linewidth]{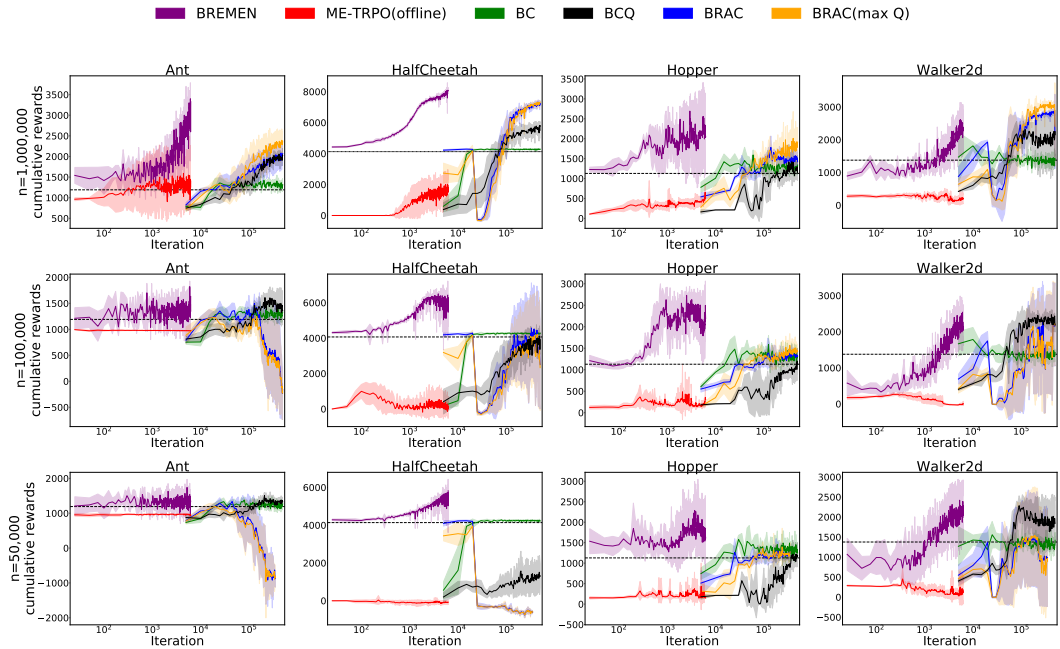}
    \caption{Performance in Offline RL experiments~(Table~\ref{table:offline_rl}). (top row) dataset size is 1M, (second row) 100K, and (bottom row) 50K, respectively. Note that x-axis is the number of iterations with policy optimization in a log-scale.\label{fig:offline_rl}}
\end{figure}

\begin{figure}[ht]
    \centering
    \includegraphics[width=\linewidth]{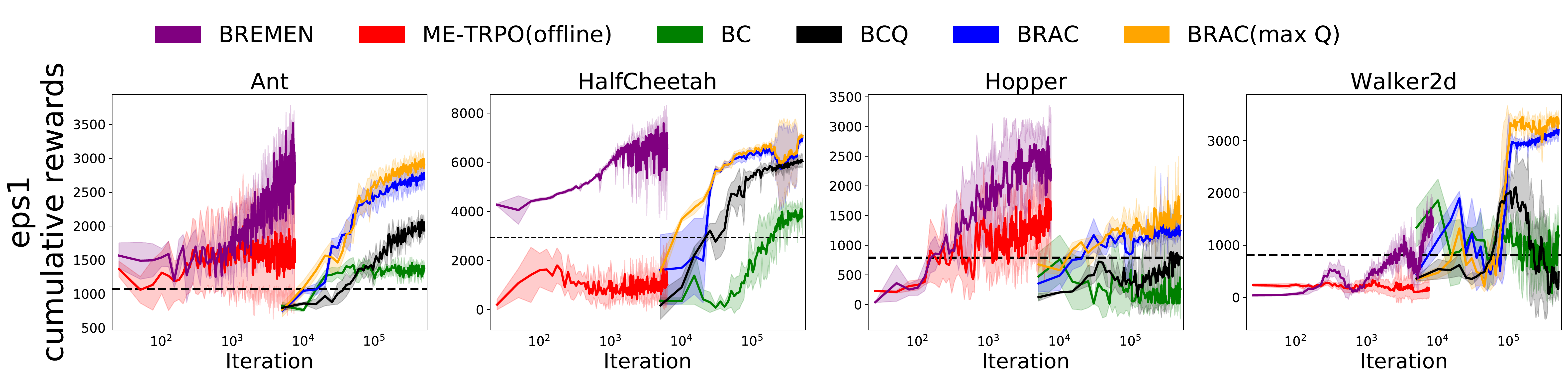}
    \caption{Performance in Offline RL experiments with $\epsilon$-greedy dataset noise $\epsilon = 0.1$. Dataset size is 1M.\label{fig:offline_eps1}}
\end{figure}

\begin{figure}[ht]
    \centering
    \includegraphics[width=\linewidth]{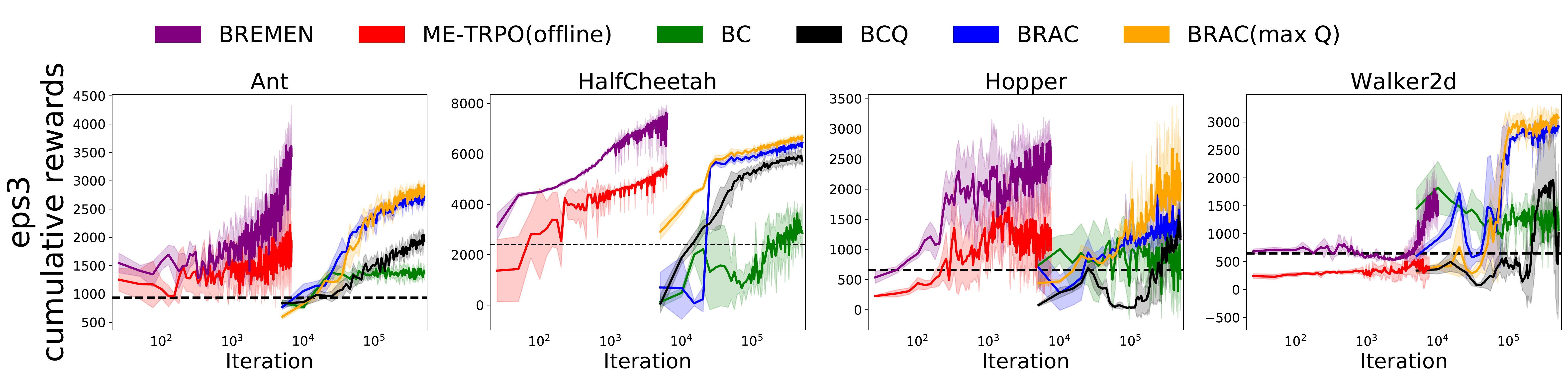}
    \caption{Performance in Offline RL experiments with $\epsilon$-greedy dataset noise $\epsilon = 0.3$. Dataset size is 1M.\label{fig:offline_eps3}}
\end{figure}

\begin{figure}[ht]
    \centering
    \includegraphics[width=\linewidth]{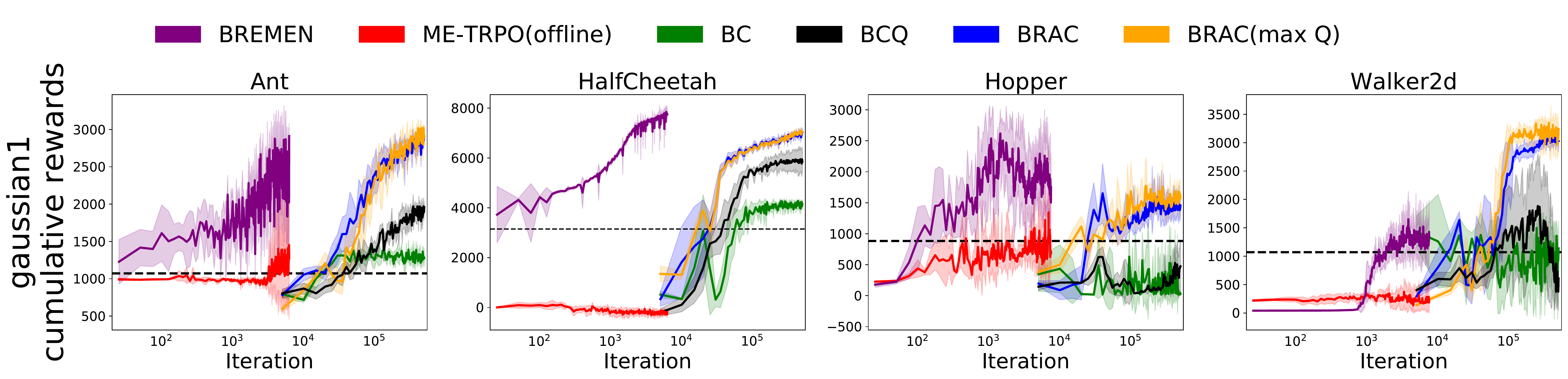}
    \caption{Performance in Offline RL experiments with gaussian dataset noise $\mathcal{N}(0,0.1^2)$. Dataset size is 1M.\label{fig:offline_gaussian1}}
\end{figure}

\begin{figure}[ht]
    \centering
    \includegraphics[width=\linewidth]{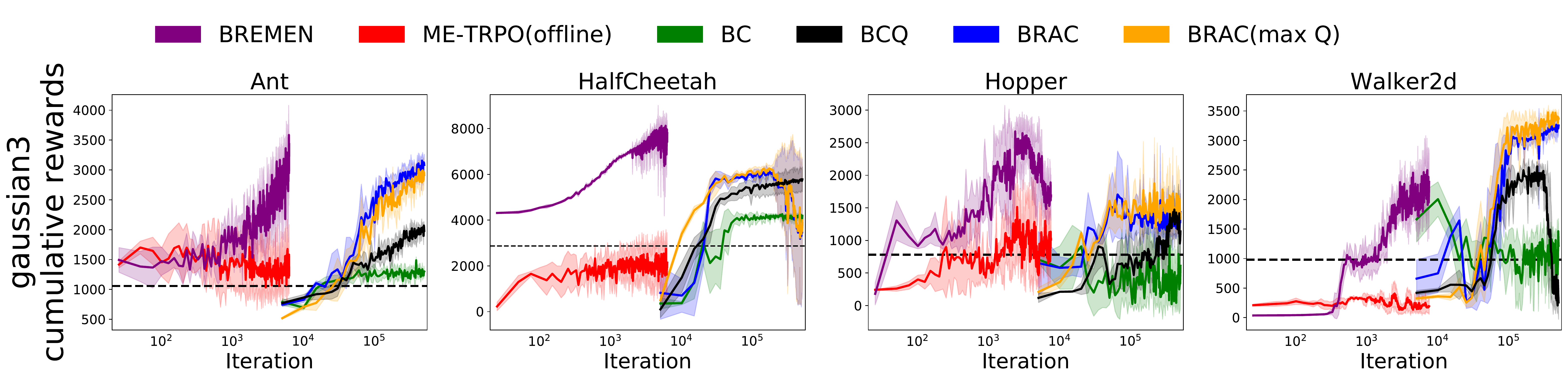}
    \caption{Performance in Offline RL experiments with gaussian dataset noise $\mathcal{N}(0,0.3^2)$. Dataset size is 1M.\label{fig:offline_gaussian3}}
\end{figure}

\begin{figure}[ht]
    \centering
    \includegraphics[width=\linewidth]{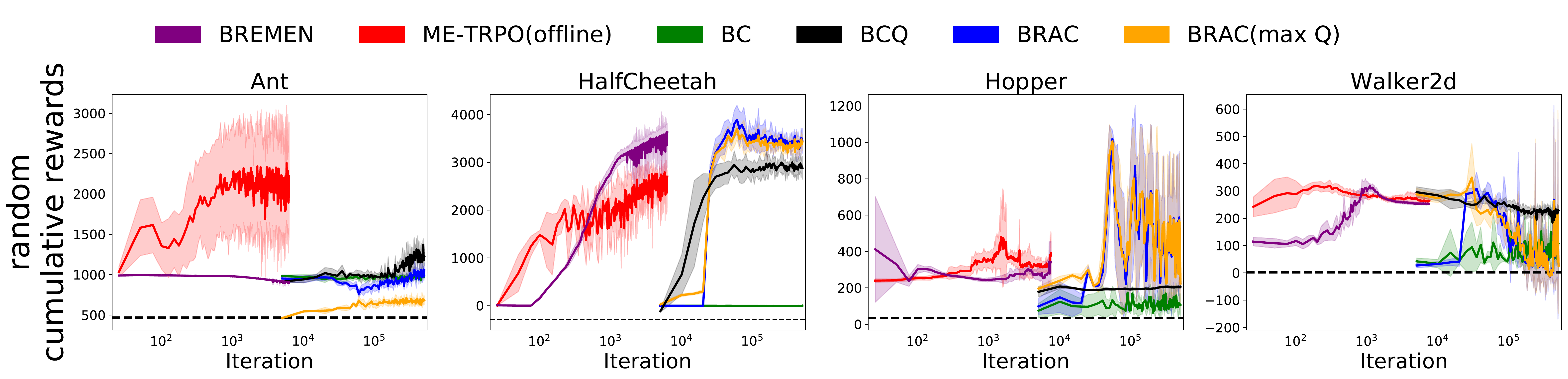}
    \caption{Performance in Offline RL experiments with completely random behaviors. Dataset size is 1M.\label{fig:offline_random}}
\end{figure}

\subsection{Deployment-Efficient RL Experiment with Different Reward Function}
In addition to the main results in Section~\ref{sec:exp_dep_eff} (Figure~\ref{fig:deployment_result}), we also evaluate BREMEN in deployment-efficient setting with different reward function.
We modified HalfCheetah environment into the one similar to cheetah-run task in Deep Mind Control Suite.\footnote{\url{https://github.com/deepmind/dm_control/blob/master/dm_control/suite/cheetah.py}}
The reward function is defined as
\begin{equation}
  r_t =\begin{cases}
    0.1\dot{x}_t & (0 \leq \dot{x}_t \leq 10) \\
    1 & (\dot{x}_t > 10), \nonumber
  \end{cases}
\end{equation}
and the termination is turned off.
Figure~\ref{fig:cheetahrun} shows the performance of BREMEN and existing methods.
BREMEN also shows better deployment efficiency than other existing offline methods and online ME-TRPO, except for SAC, which is the same trend as that of main results.

\begin{figure}[ht]
    \centering
    \includegraphics[width=\linewidth]{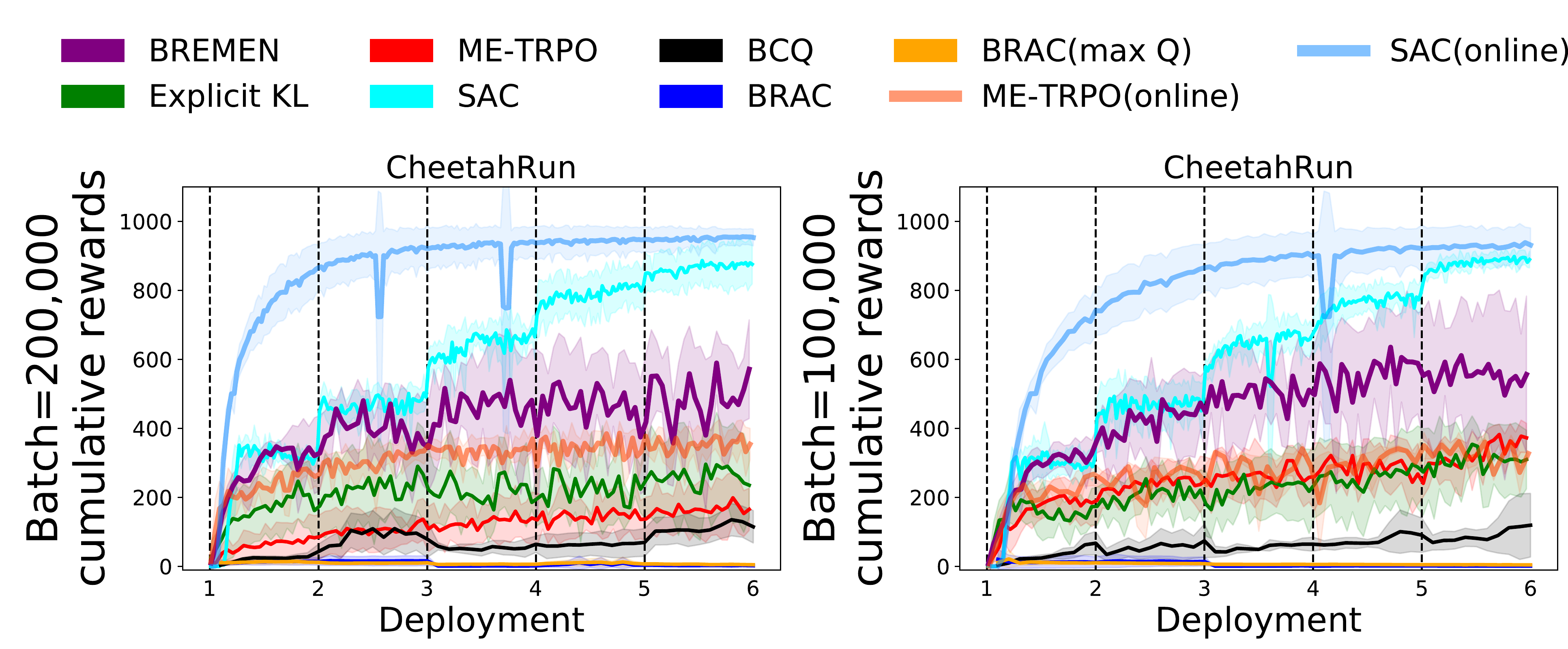}
    \caption{Performance in Deployment-Efficient RL experiments with different reward function of HalfCheetah.\label{fig:cheetahrun}}
\end{figure}

\end{document}